\newcommand{\longversion}[1]{#1}
\newcommand{\shortversion}[1]{}
\newcommand{\complClFont}[1]{\mathrm{#1}}
\newcommand{\problemFont}[1]{\textsc{#1}}
\newcommand{\mathCommandFont}[1]{\mathrm{#1}}
\newtheorem{definition}{Definition}[section]
\newtheorem{example}[definition]{Example}
\let\phi\varphi
\DeclareMathOperator\var{var}
\newcommand{\PMABD}{\protect\ensuremath\problemFont{ABD}}
\newcommand{\divABD}{\protect\ensuremath\problemFont{Div-ABD}}
\newcommand{\divPosTwoSAT}{\protect\ensuremath\problemFont{Div-Pos2SAT}}
\newcommand{\SAT}{\protect\ensuremath\problemFont{SAT}}
\newcommand{\KB}{\protect\ensuremath\text{KB}}
\newcommand{\cclone}[1]{\ensuremath{\langle #1 \rangle}}
\newcommand{\str}[1]{\ensuremath{\mathrm{Str}}}
\newcommand{\clos}[1]{\left\langle #1 \right\rangle}
\newcommand{\closneq}[1]{\left\langle #1 \right\rangle_{\neq}}
\newcommand{\cloneFont}[1]{\mathsf{#1}}
\newcommand{\BR}{\protect\ensuremath{\cloneFont{BR}}}
\newcommand{\II}{\protect\ensuremath{\cloneFont{II}}}
\newcommand{\IL}{\protect\ensuremath{\cloneFont{IL}}}
\newcommand{\IS}[2]{\protect\ensuremath{\cloneFont{IS}^{#1}_{#2}}}
\newcommand{\ID}{\protect\ensuremath{\cloneFont{ID}}}
\newcommand{\IN}{\protect\ensuremath{\cloneFont{IN}}}
\newcommand{\IV}{\protect\ensuremath{\cloneFont{IV}}}
\newcommand{\IE}{\protect\ensuremath{\cloneFont{IE}}}
\newcommand{\IM}{\protect\ensuremath{\cloneFont{IM}}}
\newcommand{\IR}{\protect\ensuremath{\cloneFont{IR}}}
\newcommand{\IBF}{\protect\ensuremath{\cloneFont{IBF}}}
\newcommand{\epos}{\protect\ensuremath{\cloneFont{EP}}}
\newcommand{\eneg}{\protect\ensuremath{\cloneFont{EN}}}
\newcommand{\pos}{\protect\ensuremath{\cloneFont{P}}}
\newcommand{\negg}{\protect\ensuremath{\cloneFont{N}}}
\newcommand{\Card}[1]{\ensuremath{|\,#1\,|}}
 \newcommand{\problemDef}[3]
{%
    
    \begin{tcolorbox}[arc=0.1mm,boxsep=-0.6mm,left=1.9mm,right=1.9mm,bottom=1.4mm,top=1.4mm,adjusted title={\strut \sc#1},colback=white!5]

    \noindent\textbf{Given:} #2.
    
    \noindent\textbf{Task:} #3
    \end{tcolorbox}
}
\newcommand{\isfacet}{\textsc{IsFacet}\xspace}
\newcommand{\NP}{\protect\ensuremath{\complClFont{NP}}\xspace} 
\newcommand{\SigmaP}{\protect\ensuremath{{\Sigma}^{\Ptime}_2}}
\newcommand{\Ptime}{{\protect\ensuremath{\complClFont{P}}}\xspace}
\newcommand{\co}{\text{co}}
\renewcommand{\mid}{\;|\;}
\newcommand{\eqdef}{\coloneqq}
\newcommand{\citex}[1]{\citeauthor{#1}~(\citeyear{#1})}
\renewcommand{\cite}[2][]{\citep[#1]{#2}}
\let\oldpar\paragraph
\renewcommand{\paragraph}[1]{\oldpar{#1{}.}}
\newcommand{\algorithmfootnote}[2][\footnotesize]{
  \let\old@algocf@finish\@algocf@finish
  \def\@algocf@finish{\old@algocf@finish
    \leavevmode\rlap{\begin{minipage}{\linewidth}
    #1#2
    \end{minipage}}
  }
}
\declaretheorem[name=Lemma,sibling=definition]{lemma}
\declaretheorem[name=Theorem,sibling=definition]{theorem}
\declaretheorem[name=Proposition,sibling=definition]{prop}
\begin{document}
\label{firstpage}

\lefttitle{J.~Schmidt, M.~Maizia, V.~Lagerkvist, J.~Fichte}

\jnlPage{1}{8}
\jnlDoiYr{2025}
\doival{10.1017/xxxxx}

\title[Complexity of Faceted Explanations in Propositional
Abduction]{%
  Complexity of Faceted Explanations\\
  in Propositional Abduction%
  \thanks{Author names are stated in reverse alphabetical
    order. 
    \longversion{%
      This is an extended version of a paper~\cite{SchmidtMaiziaLagerkvist25}
      that has been accepted to the Proceedings of the 41st
      International Conference on Logic Programming (ICLP'25).
    }
  }
}

\begin{authgrp}
\author{\sn{Johannes} \gn{Schmidt}} %
\affiliation{Jönköping University %
  \email{johannes.schmidt@ju.se}
} 

\author{\sn{Mohamed} \gn{Maizia}}
\affiliation{Jönköping University, Linköping University%
    \email{mohamed.maizia@ju.se}
}
\author{\sn{Victor} \gn{Lagerkvist}} %
\affiliation{Linköping University %
  \email{victor.lagerkvist@liu.se}
}
\author{\sn{Johannes K.} \gn{Fichte}}
\affiliation{Linköping University
  \email{johannes.fichte@liu.se}
}
\end{authgrp}

\maketitle

\begin{abstract}
Abductive reasoning is a popular non-monotonic paradigm that aims to explain observed symptoms and manifestations. It has many applications, such as diagnosis and planning in artificial intelligence and database updates. In propositional abduction, we focus on specifying knowledge by a propositional formula. The computational complexity of tasks in propositional abduction has been systematically characterized -- even with detailed classifications for Boolean fragments.
Unsurprisingly, the most insightful reasoning problems (counting and enumeration) are computationally highly challenging. Therefore, we consider reasoning between decisions and counting, allowing us to understand explanations better while maintaining favorable complexity. 
We introduce facets to propositional abductions, which are literals that occur in some explanation (relevant) but not all explanations (dispensable). Reasoning with facets provides a more fine-grained understanding of variability in explanations (heterogeneous). In addition, we consider the distance between two explanations, enabling a better understanding of heterogeneity/homogeneity.
We comprehensively analyze facets of propositional abduction in various settings, including an almost complete characterization in Post's framework.
\end{abstract}

\begin{keywords}
Propositional Abduction, Computational Complexity, Post's Framework, Fine-grained Reasoning
\end{keywords}

\section{Introduction} \label{sec:intro}
    Pedro is a passionate sailor. Today is Wednesday and he is about to go on the usual race to enjoy some waves and get a good challenge. But for some reason, nobody is out there for a race (a {\em manifestation}). He looks up into the sky and is a bit undecided, could the weather forecast have predicted calm winds or an unexpected storm resulting in calling off the race (a {\em hypothesis})
    trying to explain his observation by finding appropriate causes.
    This type of backward reasoning is called abductive reasoning, one of the fundamental reasoning techniques that is commonly believed to be naturally used by humans when searching for diagnostic explanations.
    Abduction has many applications~\cite{Dellsen_2024,Wang-ZhouEtAl19,IgnatievNarodytskaMarques19,YuEtAl2023,HuEtAl2025,YuEtAl2023} and is well-studied in the areas of artificial intelligence, knowledge representation, and non-monotonic reasoning~\cite{Miller19,Kakas92,Minsky74}.

    Qualitative reasoning problems like deciding whether an explanation exists or whether a proposition is relevant or necessary are computationally hard but still within range of modern solving approaches. More precisely, these problems are located on the second level of the polynomial hierarchy in the general case~\cite{EiterGotlob95}. 
    However, asking for relevance or necessary propositions does not provide much insight into the variability of explanations.
    Enumeration and counting allow for more fine-grained reasoning but are computationally extremely expensive~\cite{HermannPichler10,CreignouEtAl19}.
    Instead, we turn our attention to the world between \emph{relevant} (belongs to some explanations) and \emph{necessary} propositions (belongs to all explanations) and consider propositions that  are \emph{relevant but not necessary}, called \emph{facets}.

    In this work, we study the computational complexity of problems involving facets. To this end, we work in the universal algebraic setting by restricting the types of clauses/relations that are allowed (e.g., only Horn-clauses, or only 2-CNF). The resulting sets can be described by functions called {\em polymorphisms} and in the Boolean domain form a lattice known as {\em Post's lattice}~\cite{Post41}. 
    This setting makes it possible to obtain much more fine-grained complexity results and two prominent and early results are Lewis' dichotomy for propositional satisfiability~\cite{Lewis79} and Schaefer's dichotomy for Boolean constraint satisfaction~\cite{Schaefer78}. However, this approach has been applied to many more problems~\cite{Creignou2008}, including non-monotonic reasoning and several variants of abduction~\cite{NordhZ08}.
    We follow this line of research for facetted abduction.

    \smallskip
    \noindent Our \textbf{main contributions} are the following:\\[-1.5em]
    \begin{enumerate}
    \item We introduce facets to propositional abduction thereby 
    enabling a better understanding of propositions in  %
    explanations.
    \item We establish a systematic complexity characterization  for deciding facets in propositional abduction illustrated in Figure~\ref{fig:isfacet}.\\[-1.75em]
    \begin{enumerate}
        \item[(i)]
    Our classification provides a \emph{complete picture} in Post's framework for all fragments, which can be described via clauses,~e.g., Horn, 2CNF, dualHorn. Only two open cases remain: relations definable as Boolean linear equations of even length (with, or without, unit clauses).  
    \item[(ii)]
    Deciding facets is often not much harder than deciding explanation existence.
    In some surprising cases the complexity %
    increases. 
    \item[(iii)] 
    Our facet results imply a corresponding classification for the problem of deciding relevance, which has received significant attention in the classical abduction literature and mentioned as an open question.
    \end{enumerate}

\item We study the related problem of determining whether there exist two explanations of sufficiently high {\em diversity}. This metric can be precisely related to the existence of facets and several notions from our facet classification carry over. However, this problem is provably much harder and becomes NP-hard already for a small fragment of Horn consisting of just implication $(x \rightarrow y)$.
\end{enumerate}

\begin{figure}[htp]
    \centering
    \begin{subfigure}[t]{0.48\textwidth}
        \centering
        \includegraphics[width=\textwidth]{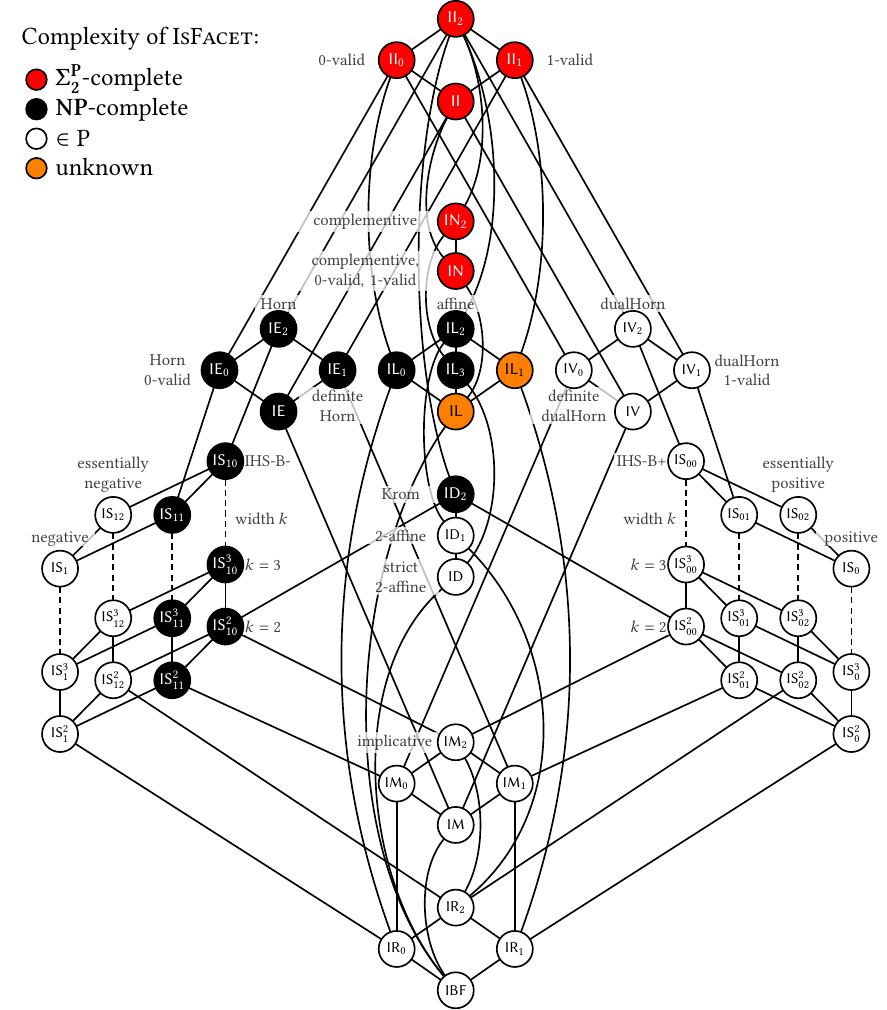}
        \caption{$\isfacet(\Gamma)$.}
        \label{fig:isfacet}
    \end{subfigure}
    \hfill
    \begin{subfigure}[t]{0.48\textwidth}
        \centering
        \includegraphics[width=\textwidth]{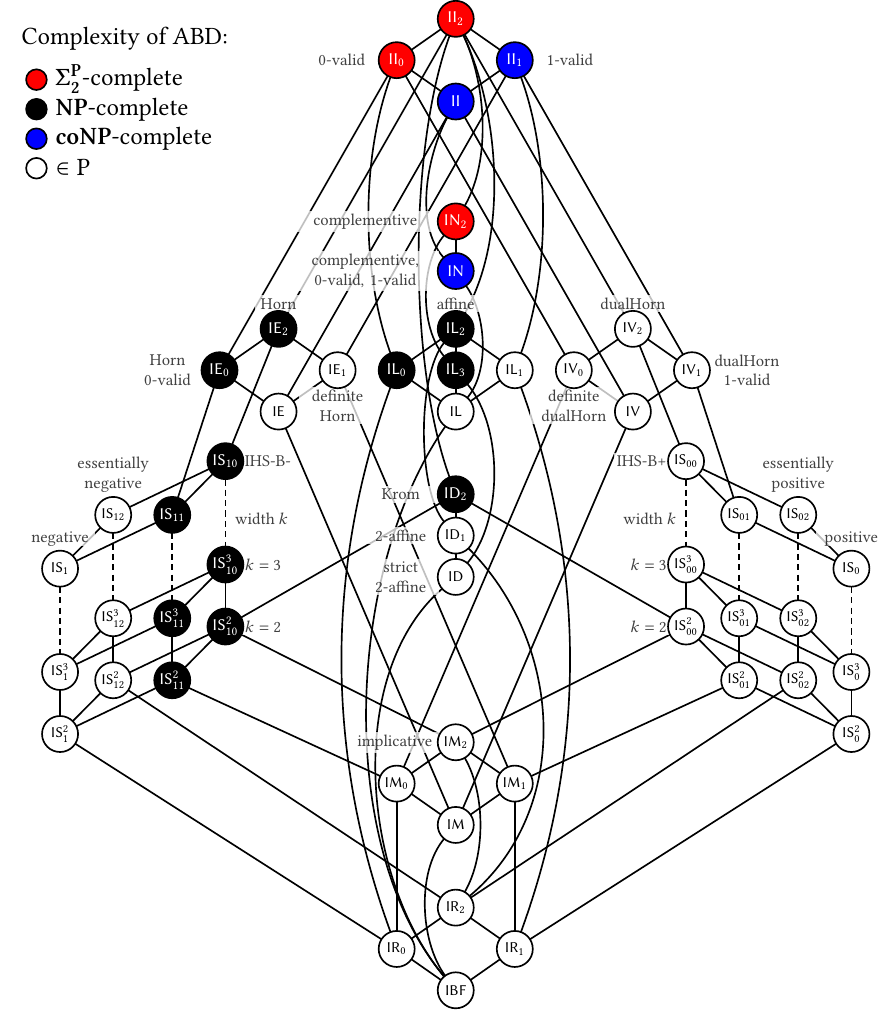}
        \caption{$\PMABD(\Gamma)$ \cite{NordhZ08}.}
        \label{fig:pabd}
    \end{subfigure}
    \caption{Illustration of complexity results via Post's lattice.}
    \label{fig:post-lattice}
\end{figure}
\paragraph{Related Works}
The computational complexity of propositional abduction is well studied. \citex{SelmanLevesque90} showed that explanation existence is tractable for Horn theories for special cases of propositional abduction, but NP-hard when checking for an explanation that contains a particular proposition. 
\citex{BylanderEtAl91} generalized the results %
to ``best'' explanations (most plausible combination of hypotheses that explains all the data) and present a tractable sub-class of abduction problems.
\citex{EiterGotlob95} proved $\Sigma^\Ptime_2$-completeness for propositional abduction. \citex{CreignouZanuttini06} established a precise trichotomy (\Ptime, \NP, $\Sigma^\Ptime_2$) in Schaefer's framework where inputs are restricted to generalized conjunctive normal form or subsets.
\citex{NordhZ08} lifted results to propositional knowledge bases and establish a tetrachotomy (\Ptime, \NP, \co-\NP, $\Sigma^\Ptime_2$). 
\citex{CreignouSchmidtThomas10} showed a complete complexity classification for all considerable sets of Boolean functions.
Relevance and dispensability are comparably speaking not as well understood. An early result by~\citex{FriedrichGottlobNejdl1990} show that it is NP-hard to determine relevance over (definite) Horn theories, and~\citex{EiterGotlob95} additionally prove that it is $\SigmaP$-hard to decide relevance if the knowledge base is an arbitrary propositional formula. \citex{zanuttini03} later asks if there is a simple relationship between deciding relevance and the complexity of the underlying abduction problem. We find it surprising that Zanuttini's question (later repeated by~\citex{NordhZ08}) still remains unanswered given that we by now have a complete understanding of the classical complexity of virtually all propositional abduction problems.
In addition, counting %
and 
enumeration complexity is also well studied~\cite{CreignouEtAl19,CreignouSchmidtThomas10,HermannPichler10}.
\citex{FellowsEtAl12}, \citex{PfandlerEtAl13}, and \citex{MahmoodEtAl21} included semantical structural restrictions (parameterized complexity) in the complexity study.
The concept of facets has originally been introduced %
in the context of answer-set
programming to enforce/forbid atoms in solutions and %
systematically investigate solutions
without counting or
enumeration~\cite{alrabbaa-et-al-rulemlpr2018,FichteGagglRusovac22}. The
complexity of ASP facets for tight, normal, and disjunctive programs
was established very recently~\cite{RusovacEtAl24}.
\citex{SpeckEtAl25} lifted facets to symbolic planning for reasoning faster on the plan space and \citex{FichteFrohlichHecher25} introduced facets to abstract argumentation.
\citex{EiterGeibinger23} studied justifications for the presence, or absence, of an atom in the context of answer-set programming including so-called contrastive explanations. They provide a basic complexity theoretical characterization.
Diversity has been considered in the literature on propositional satisfiability and logic programming,~e.g.,~\cite{MisraM024,BohlGagglRusovac23}. 
Abductive logic programming (ALP) combines logic programming with abductive reasoning, which then allows for generating hypotheses to explain observed facts or goals. \citex{EiterGottlobLeone95} studied the complexity of ALP regarding consistency, relevance, and necessity but focusing on normal and disjunctive programs and commonly used semantics (well-founded, stable).

\section{Preliminaries} \label{sec:preliminaries}

\newcommand{\BigO}[1]{\ensuremath{\mathcal{O}(#1)}}
\newcommand{\CCard}[1]{\ensuremath{||#1||}}
\newcommand{\preduction}{\leq^{\mathCommandFont{\Ptime}}_m}

We follow standard notions in 
computational complexity theory~\cite{Papadimitriou94,AroraBarak09}, 
propositional logic~\cite{kleine-lettmann-1999}, and propositional abduction~\cite{BylanderEtAl91}. Below, we briefly state
relevant %
notations.

\subsection{Computational Complexity}
Let $\Sigma$ and $\Sigma'$ be some finite alphabets. We call $I
\in \Sigma^*$ an \emph{instance} and $\CCard{I}$ denotes the size of~$I$.  
A \emph{decision problem} is some subset~$L\subseteq \Sigma^*$. %
Recall that \Ptime{} and \NP are the complexity classes of all
deterministically and non-deterministically polynomial-time solvable
decision problems~\cite{Cook71}. %
A polynomial-time many-to-one reduction~($\preduction$) 
from $L$ to $L'$ is a function $r : \Sigma^* \rightarrow {\Sigma'}^*$ such that for all $I \in \Sigma^*$ we have $I \in L$
if and only if $r(I) \in L'$ and $r$ are computable in time $\BigO{\CCard{I}\cdot c}$ for some constant~$c$. 
In other words, a polynomial-time many-to-one reduction transforms instances of the decision problem $L$ into instances of decision problem $L'$ in polynomial time.
We also need the Polynomial Hierarchy
(PH)~\cite{StockmeyerMeyer73,Stockmeyer76,Wrathall76}.
In particular, $\Delta^\Ptime_0 \eqdef \Pi^\Ptime_0 \eqdef
\Sigma^\Ptime_0 \eqdef \Ptime$ and $\Delta^\Ptime_{i+1} \eqdef
P^{\Sigma^p_{i}}$, $\Sigma^\Ptime_{i+1} \eqdef
\NP^{\Sigma^\Ptime_{i}}$, and $\Pi^\Ptime_{i+1} \eqdef
\text{co}\NP^{\Sigma^\Ptime_i}$ for $i>0$ where $C^{D}$ is the class~$C$ of
decision problems augmented by an oracle for some complete problem in
class $D$.

\paragraph{Propositional Logic}
A \emph{literal} is a variable $x$ or its negation $\neg x$. 
A \emph{clause} is a disjunction of literals, often represented as a set. A clause of arity 1, i.e., either $(x)$ or $(\neg x)$, is a {\em unit clause}.
We work in a general setting where atoms can be expressions of the form $R(x_1,  \ldots, x_r)$ for variables $x_1, \ldots, x_r$ and an $r$-ary relation $R \subseteq \{0,1\}^r$. A function $f \colon \{x_1, \ldots, x_r\} \to \{0,1\}$ is then said to satisfy an atom $R(x_1, \ldots, x_r)$ if $(f(x_1), \ldots, f(x_r)) \in R$. A (conjunctive) {\em propositional formula} $\varphi$ is a conjunction of atoms and we
write $\var(\varphi)$ for its set of variables.
A mapping $\sigma\colon \var(\varphi) \mapsto \{0,1\}$ is called an \emph{assignment} to the variables of~$\varphi$ and
a \emph{model} of a formula $\varphi$ is an assignment to $\var(\varphi)$ that satisfies $\varphi$.
For two formulas $\psi$ and $\varphi$, we write $\psi  \models \varphi$ if every model of $\psi$ also satisfies $\varphi$.

\subsection{Restrictions of Constraint Languages}
As alluded in Section~\ref{sec:intro}, we work in a fine-grained setting where not all possible relations are allowed. Formally, we say that a {\em constraint language} $\Gamma$ is a set of Boolean relations, and a {\em $\Gamma$-formula} is a propositional formula $\varphi$ where $R \in \Gamma$ for each atom $R(x_1, \ldots, x_r)$. For a constraint language $\Gamma$, we write $\SAT(\Gamma)$ for the problem of deciding if a given $\Gamma$-formula admits at least one model.
If $\Gamma$ is naturally expressible as a set of clauses, we represent $R \in \Gamma$ in clausal form. 
This is the the case for most, but not all, cases that we consider in this paper.
Usually, we do not %
distinguish between the relation, its defining clause, or an atom involving the clause. For example, we simply write $(x)$ for the unary relation $\{(1)\}$, $(\neg x)$ for $\{(0)\}$, $(x_1 \rightarrow x_2)$ or $(\neg x_1 \lor x_2)$ for $\{(0,0), (0,1), (1,1)\}$, and so on. The empty set $\emptyset$ is the (nullary) relation that is always false, and we write $(x_1 = x_2)$ for the equality relation $\{(0,0), (1,1)\}$.
\begin{table}
  \centering
  \rowcolors{2}{gray!25}{white}
  \resizebox{\linewidth}{!}{%
    \begin{tabular}{lll}
      \toprule
      Name & Definition & Corresponding co-clone \\
      \midrule
      $\text{CNF}$         & $\{c \mid c \text{ is a clause}\}$ & $\BR$ ($\II_2$) \\
      $\text{Horn}$        &  $\{c \mid c \text{ is a clause}, \text{Pos}(c) \leq 1\}$ & $\IE_2$ \\
      $\text{dualHorn}$    & $\{c \mid c \text{ is a clause}, \text{Neg}(c) \leq 1\}$ & $\IV_2$ \\
      $\text{EN}$          & $\{c \mid c \text{ is a clause}, \text{Pos}(c) = 0\} \cup \{(x), (x = y)\}$ & $\IS{}{12}$ \\
      $\text{EP}$          & $\{c \mid c \text{ is a clause}, \text{Neg}(c) = 0\} \cup \{(\neg x), (x = y)\}$ & $\IS{}{02}$ \\
      $\text{affine}$      & $(x_1 \oplus \ldots \oplus x_k) = b$, $k \geq 1$, $b \in \{0,1\}$ & $\IL_2$ \\
      \bottomrule
    \end{tabular}
  }
  \caption{%
  Constraint languages and their corresponding co-clones. Here, $\text{Pos}(c)$ and $\text{Neg}(c)$ denote the number of positive and negative literals in a clause $c$, respectively.
  For more details, we refer to the work by~\citex{BohlerRSV05} %
  and Table~\ref{tab:bases} in the supplemental material to this paper.
  }
  \label{tab:constraint-languages}
\end{table}
For a constraint language $\Gamma$ and $k \geq 1$, we often use the notation $k$-$\Gamma$ for the set of 
relations/clauses of arity at most $k$. Thus, 2-CNF contains all 1/2-clauses, and 2-affine contains the 
unary/binary relations definable as equations mod 2. Additionally, for a language $\Gamma$ we, let (1) 
$\Gamma^{\text{-}} = \Gamma \setminus \{(x), (\neg x)\}$ be $\Gamma$ without the two unit clauses, and (2) 
$\Gamma^+ = \Gamma \cup \{(x), (\neg x)\}$ be $\Gamma$ expanded with the two unit clauses. A language $\Gamma$ 
is {\em $b$-valid} for $b \in \{0,1\}$, if $(b, \ldots, b)\in R$ 
for 
each $R \in \Gamma$.
We introduce the most important constraint languages for 
the purpose of this paper in Table~\ref{tab:constraint-languages}.
This  only covers a small number of the possible 
constraint languages. For many applications, including the facet classification in this paper, doing an 
exhaustive case analysis of {\em all} possible constraint languages is too complicated, and one needs 
simplifying assumptions. Here, it is known that each 
constraint language $\Gamma$ can equivalently well  be 
described as a set of functions closed under functional 
composition and containing all {\em projections} $\pi^n_i(x_1, \ldots x_n) = x_i$, {\em clones}. 
Thus, each clone groups together many similar constraint 
languages and the Boolean clones form a lattice known as 
{\em Post's lattice} when ordered by set inclusion~\cite{Post41}. Many classification tasks become substantially simpler via 
Post's lattice, and it is well-known that each clone corresponds to a dual relational object called a {\em co-clone}, which in turn induces a useful closure property 
on relations. In this paper we only need a small fragment of this algebraic theory and define this closure property via so-called {\em primitive positive definitions} (pp-definitions) and say that an $r$-ary 
relation $R$ has a pp-definition over %
$\Gamma$ if \\[-1.5em]
$$R(x_1, \dots, x_r) := \exists y_1, \dots, y_n\, .\, \varphi(x_1, \ldots, x_r, y_1, \ldots, y_n)$$ 
\noindent
where $\varphi$ is a $(\Gamma \cup \{(x = y)\})$-formula. Thus, put otherwise, $R$ can be defined as the set of models of $\exists y_1, \ldots, y_n\, .\, \varphi(x_1, \ldots, x_r, y_1, \ldots, y_n)$ with respect to the free variables $x_1, \ldots, x_r$. The reason for allowing the equality relation $(x = y)$ as an atom is that it leads to a much simpler algebraic theory. However, we sometimes need the corresponding definability notion without (free) equality. A pp-definition where $\varphi$ is a $\Gamma$-formula is called an {\em equality-free primitive positive definition} (efpp-definition).

\begin{definition}
For a constraint language $\Gamma$, we let $\cclone{\Gamma}$ and  $\cclone{\Gamma}_{\neq}$, resp., be the smallest set of relations containing $\Gamma$ and where $R \in \Gamma$ for any (ef)pp-definable relation $R$ over $\Gamma$.
\end{definition}  
\noindent The set $\Gamma$ is in this context said to be a \emph{base} and it is known that all co-clones can be defined in this way. For details, we refer to the work by~\citex{BohlerRSV05} and Table~\ref{tab:bases} in the supplemental material to this paper. These notions generalize and unify many types of reductions and definability notions in the literature. 

\begin{example}
For example, consider the classical reduction from $4$-SAT to $3$-SAT by splitting a 4-clause $(\ell_1 \lor \ell_2 \lor \ell_3 \lor \ell_4)$ into two 3-clauses $(\ell_1 \lor \ell_2 \lor x)$ and $(\ell_3 \lor \ell_4 \lor \neg x)$ where $x$ is a fresh variable. This can be viewed as a pp-definition $(\ell_1 \lor \ell_2 \lor \ell_3 \lor \ell_4) \equiv \exists x \, . \, (\ell_1 \lor \ell_2 \lor x) \land (\ell_3 \lor \ell_4 \lor \neg x)$, and we conclude that $k\text{-CNF} \subseteq \cclone{3\text{-CNF}}$ for any $k \geq 1$. 
\end{example}

A table of all Boolean co-clones is available in the supplemental material as well as another more elaborate example.

\newcommand{\csl}{\protect\ensuremath{S}}
\newcommand{\AllExpl}{\ensuremath{\mathcal{E}}}
\newcommand{\MinExpl}{\ensuremath{\mathcal{E}_M}}
\newcommand{\REL}{\ensuremath{\mathcal{R}\text{el}\mathcal{E}}}
\newcommand{\NEC}{\ensuremath{\mathcal{N}\text{ec}\mathcal{E}}}
\subsection{Propositional Abduction}
Let $\Gamma$ be a constraint language, for example, a set of clauses.
An instance~$I$ of the \emph{positive propositional} abduction
problem over $\Gamma$, $\PMABD(\Gamma)$ for short, 
is a tuple~$I=(\KB, H, M)$ %
with $\KB$ being a $\Gamma$-formula over a finite set of Boolean variables called the \emph{knowledge base} (or \emph{theory}), $H \subseteq \var(\KB)$ called
\emph{hypotheses}, $M\subseteq \var(\KB)$ called
\emph{manifestations}. %
Since we have defined a $\Gamma$-formula as a conjunctive formula with atoms from $\Gamma$ we sometimes take the liberty of viewing the knowledge base as a set rather than as a formula.
A \emph{positive explanation}~$E$, \emph{explanation} for short, is a subset~$E\subseteq H$ such that (i)~$\KB \land E$ is satisfiable and (ii)~$\KB \land E \models M$.
An explanation~$E$ is \emph{(subset-)minimal} if no other set~$E' \subsetneq E$ is an explanation of~$I$.

The problem~$\PMABD(\Gamma)$ %
asks whether there is an explanation, which in the decision context is the same as asking whether there is a minimal explanation.
If $\Gamma$ is arbitrary, we omit $\Gamma$ from the problem and
write $\PMABD$. Note that the complexity of $\PMABD(\Gamma)$ is completely determined \cite{NordhZ08} and illustrated in Figure~\ref{fig:pabd}.

We write $\AllExpl(I)$ to refer to the set of all explanations and $\MinExpl(I)$ for the set of all subset-minimal explanations.
A variable~$x \in H$ is \emph{relevant} if $x$ belongs to some subset-minimal explanation~$E \in \MinExpl(I)$ and \emph{necessary} if $x$ belongs to all subset-minimal explanations~$E \in \MinExpl(I)$.
We abbreviate the sets of all relevant and necessary variables by~$\REL(I)$ and $\NEC(I)$, respectively.

    \newcommand{\lwed}{\textbf{\underline{w}ednesday}}
    \newcommand{\swed}{w}
    \newcommand{\lcalm}{\textbf{\underline{c}alm}}
    \newcommand{\scalm}{c}
    \newcommand{\lrace}{\textbf{\underline{n}o-race}}
    \newcommand{\srace}{n}
    \newcommand{\lstorm}{\textbf{\underline{s}torm}}
    \newcommand{\sstorm}{s}
    \newcommand{\lrain}{\textbf{\underline{r}aining}}
    \newcommand{\srain}{r}
     \begin{example}\label{ex:running}
        Consider our abduction example from %
        the introduction, which we slightly extend.
        Therefore, let the $\PMABD$ instance $I=(\KB, H,M)$
        consist of the knowledge base %
        \begin{samepage}
        \[\KB = \{ %
        \lwed \rightarrow \lrain,\;
        \lwed \land \lcalm \rightarrow \lrace,\; %
        \]\vspace{-2em}
        \[
        \lwed \land \lstorm \rightarrow \lrace\},%
        \]%
        the manifestation~$\srace$, and the hypothesis~$\{\swed, \scalm, \sstorm, \srain\}$. 
        \end{samepage}
        
        The set of all %
        explanations~$\AllExpl(I)$ consists of the explanations %
        $\{\swed, \scalm\}$, 
        $\{\swed, \scalm, \sstorm\}$, 
        $\{\swed, \scalm, \srain\}$, 
        $\{\swed, \sstorm\}$,
        $\{\swed, \sstorm, \scalm\}$,
        $\{\swed, \sstorm, \srain\}$, and
        $\{\swed, \sstorm, \scalm, \srain\}$. 
        The explanations $\{\swed, \scalm\}$ and $\{\swed, \sstorm\}$ are subset-minimal and hence constitute the set $\MinExpl(I)$.
        When considering the elements of $\MinExpl(I)$, we immediately observe that 
        the set of relevant propositions~$\REL(I)$ is formed of $\swed$, $\scalm$, and $\sstorm$. Whereas the only element in the necessary propositions~$\NEC(I)$ is $\swed$.
    \end{example}

\section{Facets in Explanations}
In non-monotonic reasoning, we commonly consider knowledge bases with multiple possible solutions, each leading to different conclusions. Central decision-based reasoning problems consider all possible solutions and consider how a variable relates to all solutions. When asking for brave (credulous) or cautious (skeptical) reasoning, we decide whether a variable belongs to one solution or all solutions respectively. The underlying idea is that brave reasoning allows for multiple potential conclusions from a knowledge base, i.e., a knowledge base may have uncertain outcomes. In contrast, skeptical reasoning requires a guaranteed outcome. This concept is also known in abductive reasoning with relevant and necessary explanations and has been considered in the literature,~e.g.,~\cite{EiterGotlob95,EiterGottlobLeone97, FriedrichGottlobNejdl1990,NordhZ08,zanuttini03}.
However, a detailed complexity classification in Post's lattice is open to date.
In this section, we consider reasoning between relevant and necessary explanations, so-called facets.
Intuitively, a variable $p$ is a facet if it is (i) part of {\em some} explanation (relevant), but (ii) not included in {\em every} explanation (dispensible). %
We start by defining facets formally. %

\begin{definition}[Facets]
    Let $I=(\KB, H, M)$ be an $\PMABD$ instance. A variable~$x \in H$ is a \emph{facet} in the instance~$I$ if  $x \in \REL(I) \setminus \NEC(I)$.
\end{definition}
\noindent Based on this definition, we define a computational problem %
whose task is to decide whether a given variable is a facet or not. 

\problemDef{$\isfacet(\Gamma)$}
{$I=(\KB, H, M, x)$ where $(\KB, H, M)$ is an $\PMABD(\Gamma)$ instance and $x \in H$}
{Is $x$ a facet in $I$?}

\noindent
    The following example illustrates facets in the sailing scenario.

    \begin{example}[Cont.]
        Consider our $\PMABD$ instance from Example~\ref{ex:running}.
        Since $\REL(I) = \{\lwed, \lcalm, \lstorm\}$ and $\NEC(I) = \{\lwed\}$, we observe that 
        $\scalm$ and $\sstorm$ are facets allowing for a variability in explanations whereas $\lwed$ occurs in all explanations and is thus not a facet.
        Note that without minimality in the relevance definition $\lrain$ would (contrary to intuition) qualify as relevant and also as a facet.
    \end{example}

The dispensability condition can be checked fairly easy in many cases. To this end, we test, given $(\KB, H, M, x)$, whether $(\KB, H \setminus \{x\}, M)$ admits an explanation or not. In particular, if $\PMABD(\Gamma)$ is in \Ptime, we can check this in polynomial time. Thus, the interesting computational aspect of $\isfacet(\Gamma)$ is to decide when {\em both} relevance and  dispensability can be decided without a major blow up in complexity.
The core parts of our proofs give an immediate classification for relevance as well.
Overall, we obtain an almost complete classification of $\isfacet(\Gamma)$.

\begin{theorem} \label{thm:main}
    The classification of $\isfacet(\Gamma)$ in Figure~\ref{fig:isfacet} is correct.
\end{theorem}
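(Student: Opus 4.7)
My plan is to establish the classification clone-by-clone along Post's lattice, exploiting the fact that each co-clone $\cclone{\Gamma}$ (and its equality-free companion $\cclone{\Gamma}_{\neq}$) provides a natural complexity boundary. For each co-clone I would first pick a canonical base (as listed in the supplemental Table~\ref{tab:bases}) and then prove matching upper and lower bounds for $\isfacet$ over that base; membership and hardness then transfer to every other base of the same co-clone via (ef)pp-definitions, which is the standard tool used for $\PMABD(\Gamma)$ in Figure~\ref{fig:pabd} and which we can reuse here after checking it is compatible with minimality.

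For the upper bounds I would use the decomposition \emph{$x$ is a facet iff $x \in \REL(I)$ and $x \notin \NEC(I)$}. The dispensability condition $x \notin \NEC(I)$ reduces in one step to $\PMABD(\Gamma)$ on the instance $(\KB, H \setminus \{x\}, M)$, so its complexity never exceeds that of $\PMABD(\Gamma)$. The relevance condition is the delicate half: it asks for a \emph{subset-minimal} explanation containing $x$. For the tractable co-clones (Horn-like and essentially-negative fragments where $\PMABD(\Gamma) \in \Ptime$) I would show that one can compute a minimal explanation containing $x$ by the usual greedy shrink starting from $H$ with $x$ kept fixed, since monotone explanation checking is available. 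For \NP-, \co-\NP- and $\SigmaP$-fragments I would give guess-and-verify algorithms for relevance using an oracle of the same power as $\PMABD(\Gamma)$, so that $\isfacet(\Gamma)$ lands in the class shown in Figure~\ref{fig:isfacet}.

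For the lower bounds I would start from the known hardness results for $\PMABD(\Gamma)$ in Figure~\ref{fig:pabd} and reduce to $\isfacet(\Gamma)$, typically by attaching a small gadget that introduces a fresh hypothesis acting as a forced relevant variable and uses pp-defined auxiliary variables to route the hardness of the underlying $\PMABD$ question into the relevance half of the facet test. Where Figure~\ref{fig:isfacet} shows $\isfacet(\Gamma)$ strictly harder than $\PMABD(\Gamma)$, I would need a gadget that simultaneously (i) simulates a hard instance whose satisfiability governs whether a minimal explanation containing $x$ exists, and (ii) enforces at least one alternative explanation witnessing dispensability, so that hardness cannot collapse.

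The main obstacle is precisely the interplay between these two conditions: the subset-minimality requirement inside $\REL$ is what pushes $\isfacet$ above $\PMABD$ in the surprising cases, and handcrafting gadgets that control minimality without accidentally breaking pp-definability (in particular when only efpp-definitions are available, so that fresh auxiliary hypothesis variables cannot be freely projected away) is the delicate core of the argument. This is also what prevents a complete answer for the two affine cases flagged as open, because in $\IL$-bases the absence of clause-like building blocks leaves very little flexibility for such minimality-forcing gadgets, and our reductions therefore stop just short of covering these co-clones.
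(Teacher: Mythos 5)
Your high-level architecture (work co-clone by co-clone, transfer via efpp-definitions, split the facet test into relevance plus dispensability, inherit hardness from $\PMABD$) matches the paper's, but two load-bearing steps are missing or wrong. First, the upper bounds for the tractable fragments: a greedy shrink from $H$ that keeps $x$ fixed does \emph{not} decide relevance. It terminates at some $E$ that is minimal among explanations \emph{containing} $x$, but such an $E$ may still satisfy that $E \setminus \{x\}$ is an explanation even when $x$ is relevant via a different witness (take $\KB = \{x \rightarrow m_1, a \rightarrow m_1, a \rightarrow m_2, b \rightarrow m_2\}$, $H = \{x,a,b\}$, $M = \{m_1,m_2\}$: the shrink order that removes $b$ first ends at $\{x,a\}$ with $\{a\}$ an explanation, yet $x$ lies in the minimal explanation $\{x,b\}$). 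That relevance is NP-hard already for definite Horn ($\IE_1$), where $\PMABD$ is in $\Ptime$, shows no order-independent greedy procedure can exist; the paper instead gives structure-specific algorithms (for implicative/dualHorn languages, exploit $1$-validity and the fact that a single hypothesis explains each manifestation, then test the maximal candidates $H \setminus h(m)$ for $m \in M_x$; for $2$-affine and EN, an equivalence-class/cluster analysis).

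Second, the lower bounds in exactly the ``surprising'' cases are asserted rather than constructed. The paper's key device, which your sketch does not identify, is that for a $1$-valid $\Gamma$ one has $\PMABD(\Gamma \cup \{(\neg x)\}) \preduction \isfacet(\Gamma \cup \{x \rightarrow y\})$: the requirement that $E' \setminus \{x\}$ fail to entail the fresh manifestation $m$ forces $\KB' \wedge E$ to admit a model with $z = 0$, so the facet condition \emph{simulates a negative unit clause}. Hardness then follows because adding $(\neg x)$ climbs the lattice ($\clos{\IE_1 \cup \{(\neg x)\}} = \IE_2$ gives NP-hardness for $\IE_1$ and $\IE$; $\clos{\II_1 \cup \{(\neg x)\}} = \BR$ gives $\SigmaP$-hardness for the $\IN$-cases), combined with a companion reduction that supplies the positive unit clause for free. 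Without this mechanism your plan covers only the cases where $\isfacet$ inherits hardness directly from $\PMABD$ via the equality gadget, not the cases where the complexity genuinely increases. (Minor point: the obstruction to using full pp-definability is not that auxiliary variables cannot be projected away --- they can, by keeping them outside $H$ --- but that identifying variables to eliminate the equality relation changes the set of minimal explanations; the paper repairs this via Lemma~\ref{lem:express_equality}.)
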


Before we prove Theorem~\ref{thm:main} systematically, we observe 
that for most complexity questions of $\isfacet(\Gamma)$ it is sufficient to consider the efpp-closure $\closneq{\Gamma}$ of $\Gamma$. 

\begin{lemma}\label{lem:baseIndneq}
    Let $\Gamma$ and $\Gamma'$ be two constraint languages. If $\Gamma' \subseteq \closneq{\Gamma}$, then
     $\isfacet(\Gamma') \preduction
     \isfacet(\Gamma)$.
\end{lemma}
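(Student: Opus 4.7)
\medskip
\noindent\textbf{Proof plan.} My approach is the standard base-independence argument for efpp-definitions, adapted from satisfiability to the facet problem. Given an $\isfacet(\Gamma')$ instance $I' = (\KB', H, M, x)$, I build an $\isfacet(\Gamma)$ instance $I = (\KB, H, M, x)$ with the \emph{same} hypotheses, manifestations, and distinguished variable~$x$; only $\KB'$ is rewritten. Concretely, for each atom $R(y_1, \ldots, y_r)$ in $\KB'$ with $R \in \Gamma'$, I fix an efpp-definition $R(y_1, \ldots, y_r) \equiv \exists z_1 \ldots z_n\, .\, \varphi_R(y_1, \ldots, y_r, z_1, \ldots, z_n)$ over $\Gamma$, which exists by the assumption $\Gamma' \subseteq \closneq{\Gamma}$. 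I then replace every such atom by its matrix $\varphi_R(y_1, \ldots, y_r, z_1, \ldots, z_n)$, introducing a disjoint copy of fresh auxiliary variables $z_1, \ldots, z_n$ per atom occurrence. These fresh variables are added to $\var(\KB)$ but crucially \emph{not} to $H$ or $M$, so they cannot appear in any candidate explanation.

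\medskip
\noindent\textbf{Correspondence of explanations.} The key claim I need to establish is that for every $E \subseteq H$, $E$ is an explanation of $I'$ if and only if $E$ is an explanation of $I$. This rests on a projection correspondence between models: an assignment $\sigma$ over $\var(\KB')$ satisfies $\KB' \land E$ if and only if it can be extended to an assignment $\tau$ over $\var(\KB)$ satisfying $\KB \land E$. The forward direction follows because each atom $R(y_1, \ldots, y_r)$ is satisfied by $\sigma$, so by the efpp-definition appropriate values for the $z_i$ exist; the converse is immediate because projecting out the fresh variables from a model of each $\varphi_R$ yields a tuple in $R$. Since $M \subseteq \var(\KB')$ and fresh variables never appear in $M$, we obtain both $\KB' \land E$ satisfiable iff $\KB \land E$ satisfiable, and $\KB' \land E \models M$ iff $\KB \land E \models M$. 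Consequently $\AllExpl(I) = \AllExpl(I')$ as subsets of $H$, hence $\MinExpl(I) = \MinExpl(I')$, $\REL(I) = \REL(I')$, and $\NEC(I) = \NEC(I')$, and so $x$ is a facet of $I$ exactly when it is a facet of $I'$.

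\medskip
\noindent\textbf{Polynomial bound and main subtlety.} Only finitely many relations from $\Gamma'$ occur in any particular instance, and for each such relation we use a single fixed efpp-definition of constant size (depending on $\Gamma$ and $\Gamma'$, not on~$I'$). Replacing each atom is therefore a local, constant-cost rewriting step, so the overall construction runs in time linear in $\CCard{I'}$ and is a valid~$\preduction$. The point requiring care is precisely why efpp-definitions (rather than arbitrary pp-definitions) suffice: if we allowed equality atoms in $\varphi_R$, the rewritten $\KB$ would not automatically be a $\Gamma$-formula unless $(x = y) \in \closneq{\Gamma}$. Restricting to the equality-free closure $\closneq{\Gamma}$ side-steps this issue and is the reason the lemma is stated for efpp rather than pp; I do not expect any further obstacle beyond bookkeeping the freshness of the auxiliary variables across atom occurrences.
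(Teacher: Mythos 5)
Your proposal is correct and follows essentially the same route as the paper, which only sketches the idea (citing Lemma~22 of Nordh and Zanuttini): replace each atom by the matrix of a fixed efpp-definition with fresh existential variables kept outside $H$ and $M$, so that the set of explanations over $H$ is preserved exactly. Your additional observations --- the projection correspondence of models and the role of equality-freeness --- correctly fill in the details the paper omits and match the discussion following the lemma in the text.
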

\begin{proof}[Proof (Idea).]
We omit details, since the construction is exactly the same as in previous work~\cite[Lemma~22]{NordhZ08}, but the basic idea is simply to replace each relation by the set of constraints prescribed by the efpp-definition, and introducing fresh variables (kept outside the hypothesis) for any existentially quantified variables.
This exactly preserves the set of (minimal) explanations. 
\end{proof}

Lifting Lemma~\ref{lem:baseIndneq} to pp-definability does not appear to be possible in general: the classical 
trick when encountering an equality constraint $(x = y)$ is to %
identify the two variables throughout the instance. But consider, for example, an instance with knowledge base $\{(x = y), (x \rightarrow m), (y \rightarrow m)\}$, $H = \{x,y\}\}$, and $M = \{m\}$. Then, $x$ and $y$
are both facets since $\{x\}$ and $\{y\}$ are both minimal explanations, but if we identify $y$ with $x$ and remove the equality constraint, we 
obtain the instance~$\{x \rightarrow m\}$ where $x$ is {\em not} a facet, since there is only one minimal explanation~$\{x\}$. 
However, the loss  of the equality relation in the efpp-closure $\closneq{\Gamma}$ turns  out to be manageable. 
We explain, in the proof of Theorem~\ref{thm:main}, why our results in the next %
two sections extend to all co-clones.
To obtain the systematic cases, we %
require numerous lemmas, which we establish in the following.

\subsection{Computational Upper Bounds}

Recall from Figure~\ref{fig:pabd} that 
$\PMABD(\Gamma)$ is always either in (i)~$\Ptime$, (ii)~(co)NP, or (iii)~$\Sigma^P_2$. Hence, our first task is to identify the corresponding classes for the $\isfacet(\Gamma)$ problem. Ideally, one could hope that $\isfacet(\Gamma)$ can be solved without a large increase in complexity,~e.g., going from $\Ptime$ to being $\NP$-hard. %
We will see that this can often, but not always, be achieved. We begin by analyzing the simple language $\{x \rightarrow y\}$ where the only allowed constraint is an implication between two variables. From Figure~\ref{fig:pabd}, we know that $\PMABD(\{x \rightarrow y\})$ is in \Ptime %
and this can be extended to $\isfacet(\{x \rightarrow y\})$ via a more involved algorithm.

\begin{lemma}\label{lem:facetImpP}
    $\isfacet(\{x \rightarrow y\}) \in P$.
\end{lemma}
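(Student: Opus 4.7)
The plan is to decide the two constituent conditions -- relevance and dispensability -- separately, each in polynomial time. Dispensability follows immediately from the authors' observation: $x$ is dispensable iff $(\KB, H \setminus \{x\}, M)$ admits an explanation, which is decidable in $\Ptime$ since $\PMABD(\{x \rightarrow y\}) \in \Ptime$. The substance of the proof is therefore to show that relevance of $x$ can also be decided in polynomial time.

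For $\Gamma = \{x \rightarrow y\}$, the knowledge base $\KB$ is simply a directed graph on $\var(\KB)$, and for each variable $v$ the forward-closure set $D_v$ of variables forced true by $v$ is computable in polynomial time by graph reachability. Because implications are monotone, $D_E := \bigcup_{h \in E} D_h$ coincides with the forward closure of $E \subseteq H$, so an explanation is precisely a subset $E \subseteq H$ with $M \subseteq D_E$, and satisfiability is automatic. The key claim is: $x$ is relevant iff there exists a \emph{witness manifestation} $m^\ast \in M \cap D_x$ such that $M \setminus D_x \subseteq D_{H^\ast}$, where $H^\ast := \{h \in H \setminus \{x\} : m^\ast \notin D_h\}$. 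The forward direction is immediate: if $E \ni x$ is a minimal explanation, then $E \setminus \{x\}$ is not, so some $m^\ast \in M$ lies in $D_x$ but not in $D_{E \setminus \{x\}}$; hence $E \setminus \{x\} \subseteq H^\ast$ and $M \setminus D_x \subseteq D_{E \setminus \{x\}} \subseteq D_{H^\ast}$. For the converse, pick any $\subseteq$-minimal subset $E' \subseteq H^\ast$ covering $M \setminus D_x$; then $E' \cup \{x\}$ is a minimal explanation containing $x$, because dropping $x$ leaves $m^\ast$ uncovered while dropping any $h \in E'$ leaves some $m \in M \setminus D_x$ uncovered (and $m \notin D_x$, so $x$ does not rescue it).

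The resulting algorithm iterates over each $m^\ast \in M \cap D_x$, builds $H^\ast$, and tests $M \setminus D_x \subseteq D_{H^\ast}$; combined with the dispensability test this decides $\isfacet(\{x \rightarrow y\})$ in polynomial time. The main conceptual obstacle is that membership of $x$ in some explanation does not suffice for relevance, since such an explanation could admit a strict subset that still covers $M$ without using $x$. The witness manifestation isolates the role played by $x$, and restricting to $H^\ast$ ensures that no other hypothesis in the witness supplies $m^\ast$ -- exactly what forces $x$ into the derived minimal explanation.
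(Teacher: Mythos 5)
Your proposal is correct and follows essentially the same route as the paper: the paper's algorithm also iterates over the manifestations $M_x = M \cap D_x$ explained by $x$ alone, tests for each such $m$ whether the hypotheses not explaining $m$ (your $H^\ast$, the paper's $H \setminus h(m)$) still entail $M \setminus M_x$, and finishes with the same dispensability check $\KB \land (H \setminus \{x\}) \models M$. Your write-up merely phrases entailment as graph reachability and spells out the correctness of the witness-manifestation criterion in more detail than the paper does.
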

\begin{proof}
Let $(\KB, H, M, x)$ be an instance of $\isfacet(\{x \rightarrow y\})$,~i.e., %
$\KB$ %
only consists of implications.
Note that $\KB$ is 1-valid. %
Consequently, there is an explanation if and only if $H$ is an explanation. To see this, we observe that $H$ is always consistent with $\KB$. Thus, it has ``maximal'' entailment power. %
To explain a single $m \in M$, a single $h \in H$ is always sufficient.
For $m \in M$, we denote by $h(m) = \{ h \in H \mid \KB \land h \models m\}$, i.e., %
all hypotheses from $H$ that \emph{explain} $m$. %

Now, we observe that $h(m)$ can be computed in polynomial time, for each $m\in M$. Denote by $M_x \subseteq M$ the manifestations from $M$ that are explained by $x$ alone, that is, $M_x = \{ m \in M \mid \KB \land x \models m\}$. The set $M_x$ can also be computed in polynomial time by repeatedly checking whether $\KB \land x \models m$.
Since $M_x$ is explained by $x$, %
we make $x$ ``relevant'' by finding an $E \subseteq H\setminus \{x\}$ that avoids explaining at least one $m \in M_x$. A maximal candidate for this is $H \setminus h(m)$.
We can accomplish this as follows:

\begin{algorithmic}[1]
    \STATE $E \gets$ 'none'
    \FOR{$m \in M_x$}
        \IF{$\KB \land H \setminus h(m) \models M \setminus M_x$} \STATE $E \gets H \setminus h(m)$ \# \emph{candidate found} \ENDIF
    \ENDFOR
    \IF{$E =$ 'none'} \RETURN False \# \emph{$x$ can not be made relevant} \ENDIF
    \RETURN $\KB \land H \setminus \{x\} \models M$ \# \emph{is there an explanation without $x$?}
\end{algorithmic}

\noindent %
This runs in p-time, since entailment for $\SAT({x \rightarrow y, x, \bar x})$ is in p-time~\cite{Schaefer78}.
\end{proof}

We continue with $\text{dualHorn}$ where $\PMABD$ is also in P. Here, membership in P for $\isfacet(\text{dualHorn})$ is less obvious. Given that $\PMABD(\text{Horn})$ is NP-complete, we %
could %
suspect that checking for relevance and dispensability is  computationally more expensive. First, we need the following technical lemma, where we recall that $\text{dualHorn}^{\text{-}} = \text{dualHorn} \setminus \{x, \neg x\}$ is the set obtained from $\text{dualHorn}$ by removing the two unit clauses.

\begin{restatable}[$\star$\protect\footnote{We prove statements marked by~``$\star$'' in the supplemental material.%
}]{lemma}{unitclauses}
\label{lem:unitclauses}
    $\isfacet(\text{dualHorn}) \preduction \isfacet(\text{dualHorn}^{\text{-}})$.
\end{restatable}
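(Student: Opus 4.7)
The plan is to eliminate the unit clauses from $\KB$ by polynomial-time unit propagation and to argue that this transformation preserves the facet status of $x^*$. Given an input $(\KB, H, M, x^*)$ of $\isfacet(\text{dualHorn})$, I would compute the set $A$ of variables forced to true and the set $B$ of variables forced to false by iterated unit propagation on $\KB$ alone; this is feasible in polynomial time since dualHorn admits linear unit propagation.

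Three degenerate situations short-circuit the reduction to a fixed trivial no-instance of $\isfacet(\text{dualHorn}^-)$, for example $(\{(x \to y)\}, \{x\}, \{y\}, x)$, where $x$ is necessary and hence not a facet. These are: (i)~$A \cap B \neq \emptyset$, so $\KB$ is unsatisfiable and no explanation exists; (ii)~$M \cap B \neq \emptyset$, so $\KB \land E \models M$ can never be consistently achieved; and (iii)~$x^* \in A \cup B$, in which case $x^*$ is either forced independently of~$E$ (hence redundant in every subset-minimal explanation) or incompatible with $\KB$, and therefore cannot lie in $\REL \setminus \NEC$.

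Otherwise I would output $(\KB', H', M', x^*)$ where $\KB'$ is the unit-propagation fixed point (no remaining clause mentions a variable in $A \cup B$), $H' = H \setminus (A \cup B)$, and $M' = M \setminus A$. Two things then need verification. First, $\KB'$ lies in $\text{dualHorn}^-$: resolving a positive unit with a dualHorn clause removes a negative literal, and resolving a negative unit removes a positive one, so the number of negative literals in any surviving clause cannot increase, and since propagation is run to a fixed point no arity-one clause can remain. Second, subset-minimal explanations correspond bijectively: the models of $\KB$ are exactly those of $\KB'$ extended by $A \mapsto 1$, $B \mapsto 0$, so for every $E \subseteq H'$ both consistency of $\KB \land E$ and the entailment $\KB \land E \models M$ are equivalent to the same conditions over $\KB'$ and $M'$; moreover variables in $A \cap H$ are redundant in every explanation while those in $B \cap H$ make $\KB \land E$ inconsistent, so restricting the hypothesis set to $H'$ loses no minimal explanation.

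The main obstacle will be the careful bookkeeping around \emph{derived} units: a shortened clause $(\neg x \lor y)$ with $x$ newly forced true itself becomes a new unit $(y)$ whose propagation may in turn force further variables of $H$ or $M$, and this cascade has to consistently update $A$, $B$, $H'$, and $M'$ before the trivialisation checks in the previous paragraph are applied. A secondary subtlety is that the reduction cannot simply "identify" forced variables via an equality constraint, since $\text{dualHorn}^-$ does not contain equality; propagation sidesteps this by genuinely deleting the forced variables from $\KB'$ instead of encoding them via a gadget.
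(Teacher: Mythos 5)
Your proposal is correct and follows essentially the same route as the paper's own proof: exhaustive unit propagation, the same three short-circuit cases (inconsistency, a manifestation forced false, $x$ itself forced), and then mapping to the propagated residual formula with the forced variables removed from $H$ and $M$. The only difference is that you spell out the verification steps (closure of dualHorn under unit resolution and the bijection on subset-minimal explanations) that the paper leaves as ``easy to verify.''
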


Next, we show that %
the result for $\isfacet(\{x \rightarrow y\})$ can be extended to $\isfacet(\text{dualHorn}^{\text{-}})$, and, thus, also to $\isfacet(\text{dualHorn})$ via Lemma~\ref{lem:unitclauses}. 

\begin{restatable}[$\star$]{lemma}{ubdualHorn}
\label{lem:dualhorn}
    $\isfacet(\text{dualHorn}^{\text{-}}) \in \Ptime$.
\end{restatable}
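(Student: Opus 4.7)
The plan is to adapt the argument of Lemma~\ref{lem:facetImpP} to the richer language $\text{dualHorn}^{\text{-}}$. Every $\text{dualHorn}^{\text{-}}$-clause contains at least one positive literal, so the language is $1$-valid; this means that $\KB \wedge E$ is satisfiable for every $E \subseteq H$, and because adding more hypotheses (positive unit constraints) only shrinks the model set, entailment is monotone in $E$. Consequently, supersets of explanations are explanations, so an explanation of $(\KB, H, M)$ exists iff $H$ itself is one, and dispensability of $x$ reduces to the single polynomial entailment query $\KB \wedge (H \setminus \{x\}) \models M$. From the same superset property, $x$ lies in some minimal explanation iff there is an explanation $E \ni x$ such that $E \setminus \{x\}$ is not an explanation.

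The crucial technical ingredient I would prove is a \emph{single-hypothesis} lemma: for every $E \subseteq H$ and every $m \in M$, $\KB \wedge E \models m$ iff some $h \in E$ satisfies $\KB \wedge h \models m$. The ``$\Leftarrow$'' direction is monotonicity, while ``$\Rightarrow$'' I would establish by contraposition using dualHorn's characteristic polymorphism. If every $h \in E$ admits a model $\mu_h$ of $\KB$ with $\mu_h(h) = 1$ and $\mu_h(m) = 0$, then because the models of a dualHorn formula are closed under coordinate-wise maximum, the pointwise join $\mu = \bigvee_{h \in E} \mu_h$ is itself a model of $\KB$ in which every $h \in E$ is true yet $m$ stays false, witnessing $\KB \wedge E \not\models m$. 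With this, the problem collapses to a hitting-set picture: setting $h(m) = \{h \in H \mid \KB \wedge h \models m\}$ (computable in polynomial time by dualHorn entailment), a set $E \subseteq H$ is an explanation iff $E \cap h(m) \neq \emptyset$ for every $m \in M$, and minimal explanations are precisely minimal hitting sets of $\{h(m) \mid m \in M\}$.

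Mirroring the algorithm of Lemma~\ref{lem:facetImpP}, $x$ will then be relevant iff there exists $m \in M$ with $x \in h(m)$ such that $(H \setminus h(m)) \cup \{x\}$ still hits every $h(m')$, which is checkable in polynomial time by iterating over $m \in M$ and verifying, for each $m' \in M$ with $x \notin h(m')$, that $h(m') \not\subseteq h(m)$. Combined with the dispensability query, the whole algorithm runs in polynomial time. The main obstacle is the single-hypothesis lemma itself: I must confirm that attaching the positive unit clauses $\{h = 1 : h \in E\}$ and the negative unit clause $\neg m$ to $\KB$ preserves max-closure, which holds since the maximum of $1$s is $1$ and of $0$s is $0$. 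The analogous claim genuinely fails for Horn (consistent with $\PMABD(\text{Horn})$ being $\NP$-complete), so the proof truly hinges on the specific dualHorn structure.
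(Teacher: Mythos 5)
Your proposal follows essentially the same route as the paper's proof: both rest on $1$-validity (so an explanation exists iff $H$ is one, and dispensability is a single entailment query) together with the observation that a single hypothesis suffices to entail each manifestation, after which the algorithm of Lemma~\ref{lem:facetImpP} applies with $h(m)$ and $M_x$ still computable in polynomial time. The only difference is that you explicitly justify the single-hypothesis claim via closure of dualHorn models under coordinate-wise maximum, a step the paper asserts without proof; your justification is correct.
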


Our second major tractability result concerns 2-$\text{affine}$, i.e., either unit clauses or relations definable by $(x \oplus y = 0)$ (equality) or $(x \oplus y = 1)$ (inequality).

\begin{restatable}{lemma}{widthtwoaffineP}
\label{lem:width2affineP}
    $\isfacet(\text{2-affine}) \in \Ptime$. 
\end{restatable}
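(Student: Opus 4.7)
The plan is to exploit the simple algebraic structure of $\text{2-affine}$ formulas, which consist of unit clauses $(x)$, $(\neg x)$, equalities $(x = y)$ and disequalities $(x \neq y)$. In polynomial time I would build the graph having an edge for every equality/disequality atom, compute its connected components and, fixing a representative $u_C$ per component $C$, label every variable $v \in C$ by a sign $\sigma(v) \in \{+,-\}$ meaning $v = u_C$ or $v = \neg u_C$. Disequality cycles of odd parity and conflicting unit clauses are detected along the way; if such a conflict exists, $\KB$ is unsatisfiable, no explanation exists and $x$ is trivially not a facet. Each remaining component is then either \emph{forced} (a unit clause propagates through the signs to fix $u_C$, and every variable $v$ in $C$ takes a determined Boolean value) or \emph{free} (the value of $u_C$ is still arbitrary).

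Next I would characterize minimal explanations. Including $h \in E$ in a free component $C$ forces $u_C$ to the value prescribed by $\sigma(h)$, so within any single free component all hypotheses contained in $E$ must share the same sign, otherwise $\KB \land E$ is inconsistent. Including $h$ in a $1$-forced component is redundant (it changes no model of $\KB$), and including $h$ in a $0$-forced component destroys consistency. For a manifestation $m \in M$: if $\KB$ already forces $m = 0$, no explanation exists; if $\KB$ forces $m = 1$, $m$ is entailed for free; otherwise $m$ lies in some free component $C$ and $E$ must force $u_C$ so that $m$ evaluates to~$1$. Hence each free component $C$ with $M \cap C \neq \emptyset$ is \emph{required}, and all $m \in M \cap C$ must agree on a required direction $d_C \in \{+,-\}$ (else no explanation exists). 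A minimal explanation is therefore precisely a set that picks exactly one $h \in H \cap C$ with $\sigma(h) = d_C$ for each required $C$, and the instance is feasible iff every required component admits such an~$h$.

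From this characterization, deciding whether $x$ is a facet is immediate. Let $C$ be the component containing $x$. Then $x$ is relevant iff $C$ is required, $\sigma(x) = d_C$, and every other required component contains some correctly-signed hypothesis; $x$ is additionally dispensable iff there is an alternative $h' \in (H \cap C) \setminus \{x\}$ with $\sigma(h') = d_C$, which witnesses a minimal explanation that avoids $x$. All of these checks -- component construction, sign propagation, consistency of the required directions, and the search for an alternative witness -- run in polynomial time in $\CCard{I}$. The main obstacle is purely bookkeeping: propagating sign labels correctly through interleaved equalities and disequalities, handling the cascade of unit clauses that may collapse several free components into the forced state, and ensuring that hypotheses sitting in already-forced components are not mistakenly treated as members of a minimal explanation. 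Once those details are carefully tracked, the polynomial-time bound follows.
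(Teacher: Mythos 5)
Your proposal is correct and follows essentially the same route as the paper's proof: propagate unit clauses, group variables into equivalence classes linked by equality/disequality (the paper's ``clusters'' are exactly your signed connected components), observe that a minimal explanation picks one correctly-signed hypothesis per class containing a manifestation, and read off relevance and dispensability from that characterization. Your version is in fact slightly more careful on one point the paper glosses over -- the case where manifestations of opposite sign land in the same cluster -- and the only blemish is the loose phrase ``$1$-forced component'' where you mean that the individual hypothesis is forced to $1$.
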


\begin{proof}
Let $(\KB, H, M, x)$ be an instance of $\isfacet(\text{2-affine})$. We assume that each relation in $\KB$ is represented by precisely one linear equation of arity at most 2, see~\cite{CreignouOS11} and \cite{MahmoodEtAl21}.
First, if $\KB$ is not satisfiable we answer no. Second, we propagate all unit clauses as in Lemma~\ref{lem:unitclauses}.
Each remaining equation then expresses either equality or inequality between two variables. With the transitivity of the equality relation and the fact that in the Boolean case $a \neq b \neq c$ implies $a = c$, we can identify equivalence classes of variables such that each two classes are either independent or they must have contrary truth values.
We call a pair of dependent equivalence classes $(X, Y)$ a \emph{cluster}, i.e., $X$ and $Y$ must take contrary truth values.
Denote by $X_1, \dots, X_p$ the equivalence classes that contain variables from $M$ such that $X_i \cap M \neq \emptyset$. 
Denote by $Y_1, \dots, Y_p$ the equivalence classes such that for each $i$ the pair $(X_i, Y_i)$ represents a cluster.
We make the following stepwise observations:
(1) there is an explanation if and only if $H \cap X_i \neq \emptyset$ for every $1 \leq i \leq p$, (2) a subset-minimal explanation is constructed by taking exactly one representative from each $X_i$. %
(3) $x$ can be made relevant if $x \in X_i$, for some $i$.
(4) $x$ is a facet if additionally each  $X_i$ contains at least one representative different from $x$.
These checks can be done in polynomial time. We conclude that $\isfacet(\text{2-affine}) \in \Ptime$.
\end{proof}

We continue with the corresponding membership questions for complexity classes above~$\Ptime$. Here, we %
make a case distinction of whether the underlying satisfiability problem is in P, and in particular whether $\SAT(\Gamma^+)$ is in P. We begin with the following %
lemma.

\begin{restatable}[$\star$]{lemma}{NPverify}\label{lem:NPverify}
    If $\SAT(\Gamma^+) \in \Ptime$, then there is a polynomial time algorithm to determine whether a given $E \subseteq H$ is an explanation for a given abduction instance $(\KB, H, M)$.
\end{restatable}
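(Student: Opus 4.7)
The plan is to unwind the definition of an explanation and observe that each of the two defining conditions reduces, in polynomial time, to either a satisfiability check or an unsatisfiability check for $\SAT(\Gamma^+)$. Since $\Gamma^+ = \Gamma \cup \{(x), (\neg x)\}$ and $\SAT(\Gamma^+)$ is in \Ptime{} by assumption, both checks will run in polynomial time and we can combine them into a single polynomial-time verifier.

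First, I would treat the consistency requirement. Given $(\KB, H, M)$ and $E \subseteq H$, condition (i) says that $\KB \land E$ is satisfiable. Since $E$ is just a set of variables that are asserted to be true, $\KB \land E$ is exactly the $\Gamma^+$-formula obtained from $\KB$ by appending the unit clause $(x)$ for every $x \in E$. A single call to the assumed polynomial-time $\SAT(\Gamma^+)$ oracle then decides (i).

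Next, I would tackle the entailment requirement. Condition (ii) says $\KB \land E \models M$, equivalently $\KB \land E \models m$ for every $m \in M$. For each individual $m \in M$, this is equivalent to the unsatisfiability of $\KB \land E \land (\neg m)$, which is again a $\Gamma^+$-formula (we add the positive unit clauses for variables in $E$ together with the negative unit clause $(\neg m)$). So I would loop over the manifestations $m \in M$ and, for each one, run the $\SAT(\Gamma^+)$ procedure; $E$ entails $m$ precisely when the resulting formula is unsatisfiable. Together with the test for (i), this yields at most $|M|+1$ polynomial-time $\SAT(\Gamma^+)$ calls, hence an overall polynomial-time algorithm.

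I do not expect any genuine obstacle here: the only subtlety is making sure that adding the unit clauses for $E$ and for $\neg m$ keeps us inside the language $\Gamma^+$, which is exactly why the hypothesis is stated for $\Gamma^+$ rather than for $\Gamma$. Everything else is a direct unfolding of the definition of explanation together with the standard reduction of entailment to unsatisfiability on the negated goal.
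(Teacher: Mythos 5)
Your proof is correct and matches the paper's own argument exactly: one $\SAT(\Gamma^+)$ call for the consistency of $\KB \land E$, then one call per $m \in M$ testing unsatisfiability of $\KB \land E \land (\neg m)$, with the observation that all these formulas stay within $\Gamma^+$. Nothing is missing.
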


In particular, we obtain the following general statement, which shows that the complexity of $\isfacet(\Gamma)$ for many natural cases does not jump to $\SigmaP$.

\begin{restatable}[$\star$]{lemma}{InNp} \label{lemma:in_np}
    For any constraint language $\Gamma$, $\SAT(\Gamma^+) \in \Ptime \Rightarrow \isfacet(\Gamma) \in \NP$.
\end{restatable}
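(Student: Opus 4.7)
The plan is to give an \NP{} algorithm that guesses two subsets of $H$ witnessing the two conditions in the facet definition, and then uses Lemma~\ref{lem:NPverify} to verify each guess in polynomial time. Recall that $x$ is a facet iff $x \in \REL(I)$ (i.e., $x$ belongs to some subset-minimal explanation) and $x \notin \NEC(I)$ (i.e., some subset-minimal explanation omits $x$). The second condition is equivalent to the existence of \emph{any} explanation $E' \subseteq H \setminus \{x\}$, since any explanation contains a subset-minimal one on the same or smaller variable set.

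First I would describe the nondeterministic algorithm. On input $(\KB, H, M, x)$, it guesses two sets $E_1, E_2 \subseteq H$ with $x \in E_1$ and $x \notin E_2$, and then verifies: (a) $E_1$ is a subset-minimal explanation of $(\KB, H, M)$; (b) $E_2$ is an explanation of $(\KB, H \setminus \{x\}, M)$. For (a), checking that $E_1$ is an explanation is polynomial by Lemma~\ref{lem:NPverify}, and subset-minimality is tested by verifying, for each $y \in E_1$, that $E_1 \setminus \{y\}$ is \emph{not} an explanation; this is at most $|E_1|$ applications of Lemma~\ref{lem:NPverify}. For (b), a single application of Lemma~\ref{lem:NPverify} suffices.

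For correctness, if $x$ is a facet then there exists a subset-minimal explanation $E_1 \ni x$ and some minimal explanation $E_2' \not\ni x$; the algorithm accepts by guessing these (for (b) any explanation omitting $x$ works). Conversely, if the algorithm accepts, then $E_1$ directly witnesses $x \in \REL(I)$, and the explanation $E_2 \subseteq H \setminus \{x\}$ contains a subset-minimal explanation also avoiding $x$, so $x \notin \NEC(I)$. The assumption $\SAT(\Gamma^+) \in \Ptime$ is used exactly to invoke Lemma~\ref{lem:NPverify}, so the overall procedure is a polynomial-time verifier given the nondeterministic guesses, placing $\isfacet(\Gamma)$ in \NP{}.

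No step is really an obstacle here; the only subtlety worth flagging in the write-up is the reduction from ``some minimal explanation omits $x$'' to ``some explanation omits $x$''. This is immediate because any explanation contains a subset-minimal one, and that minimal one inherits the property of not containing $x$, so the two conditions coincide. This lets the \NP{} verifier avoid a second (costly) minimality check for $E_2$.
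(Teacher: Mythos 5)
Your proposal is correct and follows essentially the same route as the paper: guess $E_1 \ni x$ and $E_2 \not\ni x$ and verify both using the polynomial-time checker from Lemma~\ref{lem:NPverify}. The only difference is that the paper certifies relevance with a leaner check --- it verifies only that $E_1$ is an explanation while $E_1 \setminus \{x\}$ is not (which suffices by the same convexity property of explanations that justifies your single-element-removal minimality test) --- whereas you verify full subset-minimality of $E_1$; both variants are polynomial and correct.
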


This covers a substantial number of cases since $\SAT(\Gamma^+)$ is in P when $\Gamma$ is {\em Schaefer},~i.e., contained in $\IV_2$ ($\text{dualHorn}$), $\IE_2$ ($\text{Horn}$), $\IL_2$ ($\text{affine}$), or $\ID_2$ ($\text{2-CNF}$). Our last major tractable case concerns the set of essentially negative clauses $\text{EN}$.

\begin{restatable}{lemma}{essentiallyNegative}
\label{lem:essentially_negative}
    $\isfacet(\text{EN}) \in P$.
\end{restatable}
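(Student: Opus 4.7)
The plan is to exploit the restricted structure of EN-formulas --- positive units propagate positive information through equalities, but negative clauses can never force a variable to be true --- to show that the minimal explanations admit a simple combinatorial description that can be computed in polynomial time.

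First, we preprocess: partition $\var(\KB)$ into equivalence classes via the equality atoms of $\KB$, and let $T$ be the collection of classes containing a variable that appears in some positive unit clause of $\KB$. For any $E \subseteq H$, set $T_E \eqdef T \cup \{c : c \cap E \neq \emptyset\}$. The key structural claim to establish is that a variable $y$ is entailed by $\KB \wedge E$ iff its class lies in $T_E$, and $\KB \wedge E$ is satisfiable iff no negative clause has all its classes in $T_E$. The nontrivial direction uses the fact that assigning every variable in a $T_E$-class to true and every other variable to false is always a model provided no negative clause is already forced to fail.

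Next, we characterize minimal explanations. Let $M^* \eqdef \{m \in M : m\text{'s class is not in } T\}$ and let $C(M^*)$ be the set of classes meeting $M^*$. Each $m \in M^*$ requires at least one hypothesis from its class to be included in $E$; conversely, any additional hypothesis --- whether a second representative of a class in $C(M^*)$ or a representative of a class disjoint from $M^*$ --- can be deleted without losing entailment, since shrinking $T_E$ in an EN-formula neither destroys an entailment of a variable whose class stays in $T_E$ nor creates a new negative-clause violation. Hence the minimal explanations are exactly the sets that pick one hypothesis from $H$ per class in $C(M^*)$, and they all realize the same $T_E = T \cup C(M^*)$.

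The facet test is then routine: an explanation exists iff every class in $C(M^*)$ meets $H$ and $T \cup C(M^*)$ satisfies every negative clause; under that assumption $x \in H$ is relevant iff its class $c$ lies in $C(M^*)$, and $x$ is necessary iff additionally $c \cap H = \{x\}$, so $x$ is a facet iff $c \in C(M^*)$ and $|c \cap H| \geq 2$. Each step --- union--find over the equalities, extracting $T$ and $C(M^*)$, the negative-clause check, and measuring $|c \cap H|$ --- is polynomial-time. The main obstacle is carefully justifying the minimality characterization: unlike the situation for Horn, enlarging $E$ in an EN-formula only strengthens the negative-clause constraints and cannot create any new entailment to a variable whose class has not already been added to $T_E$, so every supposedly redundant hypothesis is genuinely removable.
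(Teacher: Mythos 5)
Your proposal is correct and follows essentially the same route as the paper's proof: partition the variables into equality classes, observe that each manifestation needs exactly one hypothesis from its class, so minimal explanations pick one representative per relevant class, and $x$ is a facet iff its class contains a (not-yet-entailed) manifestation and at least two hypotheses. The only cosmetic difference is that you absorb the positive unit clauses into the set $T$ of pre-entailed classes, whereas the paper first eliminates them by the unit-propagation reduction of Lemma~\ref{lem:unitclauses}; your explicit justification of the entailment and minimality characterization is a welcome addition.
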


\begin{proof}
We assume an arbitrary instance of $\isfacet(EN)$: $(\KB,H,M,x)$. We first apply unit propagation, exactly as in Lemma~\ref{lem:unitclauses}. We can now assume that the instance only contains negative clauses of 
size $\geq 2$ and equality clauses. 
We organize all variables which are equal to each other into equivalence classes as in Lemma~\ref{lem:width2affineP}, with the exception that all 
classes are independent in this case. If some equivalence class that 
contains an $m_i \in M$  does not contain variables from $H$,  this 
$m_i$ cannot be entailed. Thus, the abduction problem has no solutions and no facets.
    
 Otherwise, if all classes that contain an $m_i \in M$ contain at least one variable from $H$, we set all variables in these classes to true 
 and check if this is consistent with $\KB$. This, guarantees the existence of abduction solutions. 
 If this is the case, we can check if %
 $x$ is a facet. For this, we need two conditions: let $x \in C$ where $C$ is an equivalence 
 class. First, there must be at least one manifestation $m_i \in C$, 
 else $x$ cannot imply any $m_i$ and thus is never needed in a subset-minimal solution ($x$ would not be relevant). Second, there must be 
 at least one variable $x_i\in C$ different from $x$, otherwise $x$ will 
 always be needed to explain $m_i$ and there can be no solution without it ($x$ would be necessary).
\end{proof}

\subsection{Computational Lower Bounds}
We begin with a general result that implies that the facet problem is always at least as hard as the underlying abduction problem, provided the Boolean equality  relation can be expressed.

\begin{lemma}\label{lem:abdIsfacet}
    $\PMABD(\Gamma) \preduction \isfacet(\Gamma)$ if $(x = y) \in \Gamma$ for any constraint language $\Gamma$.
\end{lemma}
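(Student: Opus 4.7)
The plan is to exhibit a polynomial-time many-to-one reduction that takes an arbitrary $\PMABD(\Gamma)$ instance $(\KB, H, M)$ and produces an $\isfacet(\Gamma)$ instance $(\KB', H', M', z)$ in which a designated variable $z$ is a facet if and only if $(\KB, H, M)$ admits an explanation. The guiding idea is to attach to the original instance a small ``gadget'' whose sole purpose is to create two symmetric ways of explaining a new manifestation, so that whenever the original problem has any explanation at all, the gadget forces two distinct minimal explanations to exist, one containing $z$ and one not.

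Concretely, introduce three fresh variables $z, z', m^*$ and set
\[
\KB' \;=\; \KB \,\cup\, \{(z = m^*),\, (z' = m^*)\}, \quad H' = H \cup \{z, z'\}, \quad M' = M \cup \{m^*\}.
\]
This is a valid $\Gamma$-formula because $(x=y) \in \Gamma$ by hypothesis, and the reduction is clearly computable in linear time. I would then designate $z$ as the variable to be tested for facethood.

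For the forward direction, suppose $E \in \MinExpl(\KB, H, M)$. Because the fresh variables $z, z', m^*$ do not occur in $\KB$ and are not in $H$ or $M$, the sets $E \cup \{z\}$ and $E \cup \{z'\}$ are both explanations of $(\KB', H', M')$: each is consistent with $\KB'$ (extend a model of $\KB \land E$ by the forced value of $m^*$), entails $M$ via $E$, and entails $m^*$ via the equality constraint. Subset-minimality follows since removing any element of $E$ destroys entailment of $M$ (by minimality of $E$) while removing $z$ (resp.\ $z'$) leaves nothing to entail $m^*$. Hence $z$ lies in some minimal explanation (relevant) and is absent from another (dispensable), so $z$ is a facet.

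For the converse, suppose $z$ is a facet; in particular $(\KB', H', M')$ has at least one minimal explanation $E'$. I would show that $E := E' \cap H$ is an explanation of $(\KB, H, M)$: any model of $\KB' \land E'$ restricted to $\var(\KB)$ is a model of $\KB \land E$, yielding consistency, and entailment $\KB \land E \models M$ follows from $\KB' \land E' \models M$ together with the fact that the gadget constraints involve only the fresh variables. The main thing to be careful about is this last step: one must verify that the equality atoms $(z = m^*)$ and $(z' = m^*)$ cannot help entail any original manifestation $m \in M$, which holds because $m^*, z, z'$ are fresh and hence do not appear in $\KB$. With this verified, the two implications combine to give $(\KB, H, M) \in \PMABD(\Gamma)$ iff $(\KB', H', M', z) \in \isfacet(\Gamma)$, completing the reduction.
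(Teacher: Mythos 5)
Your proposal is correct and is essentially the paper's own reduction: the same gadget with two fresh hypotheses tied by equality to a fresh manifestation, the same forward argument producing $E \cup \{z\}$ and $E \cup \{z'\}$ as minimal explanations, and the same converse of restricting a minimal explanation back to $H$. The only (immaterial) difference is that you take $E' \cap H$ where the paper takes $E' \setminus \{x\}$, which is a slightly cleaner way to discard the gadget variables.
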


\begin{proof}
Given an instance $(\KB, H, M)$ of $\PMABD(\Gamma)$ we let $x,y,m$ be fresh variables. We define the instance $(\KB', H', M', x)$ of $\isfacet(\Gamma)$ as $\KB' = \KB \cup \{(x=m), (y=m)\},\ H' = H \cup \{x,y\},\ M' = M \cup \{m\}$.
 We claim that $(\KB, H, M)$ admits an explanation if and only if $x$ is a facet in $(\KB', H', M')$. (``$\Rightarrow$''): %
 assume that $E \subseteq H$ is a subset-minimal explanation. Then, $E \cup \{x\}$ and $E \cup \{y\}$ are both subset-minimal explanations in $(\KB', H', M')$, so $x$ is a facet. 
 (``$\Leftarrow$''): assume that $x$ is a facet in $(\KB', H', M')$. Then, there exists a subset minimal explanation $E' \subseteq H'$ where $x \in E'$, and it follows that $E' \setminus \{x\}$ is a (subset-minimal) explanation for $(\KB, H, M)$.
\end{proof}

\noindent By combining this with Lemmas~\ref{lem:baseIndneq} and \ref{lem:express_equality}, %
we inherit all hardness results from $\PMABD$. All non-polynomial cases of $\PMABD(\Gamma)$ satisfy 
$\Gamma \not \subseteq \IS{}{12}$ and $\Gamma \not \subseteq \IS{}{02}$,~i.e., %
such languages are not essentially negative and not essentially positive~\cite{NordhZ08}.

\begin{lemma}{\cite[Lemma 9]{MahmoodEtAl21}}\label{lem:express_equality}
Let $\Gamma$ be a constraint language.
If $\Gamma \not \subseteq \IS{}{12}$ and $\Gamma \not \subseteq \IS{}{02}$, then $(x=y) \in \closneq{\Gamma}$ and $\clos{\Gamma} = \closneq{\Gamma}$.
\end{lemma}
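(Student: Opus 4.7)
The plan is to leverage the specific structure of Post's lattice for co-clones. The two co-clones $\IS{}{02}$ and $\IS{}{12}$ correspond to the \emph{essentially positive} and \emph{essentially negative} fragments respectively: $\IS{}{02}$ contains exactly the relations efpp-definable from positive clauses plus the constant $0$, and $\IS{}{12}$ the ones efpp-definable from negative clauses plus the constant $1$ (see Table~\ref{tab:constraint-languages} and the bases in the supplemental material). The two hypotheses therefore provide a relation $R_1 \in \Gamma$ that is \emph{not essentially negative}, forcing it to contain a tuple with a pattern of $0$s and $1$s that no conjunction of negative clauses (with equalities) can match, and a dual $R_2 \in \Gamma$ that is \emph{not essentially positive}.

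The first main step is to translate each of $R_1$ and $R_2$ into small efpp-defined binary auxiliary relations that expose the offending asymmetry. Concretely, by identifying variables within a single atom and projecting out all but two coordinates (no equality atoms, since we are working in $\closneq{\Gamma}$), from $R_1$ I would extract a binary $S_1$ that is ``too large'' to be essentially negative, for instance one containing $(0,0),(1,1)$ plus exactly one of $(0,1),(1,0)$; and from $R_2$ the dual $S_2$ with the opposite asymmetry. The second main step is to conjunct $S_1$ and $S_2$ (with the appropriate variable identifications) so that the two asymmetries cancel out and leave precisely the set $\{(0,0),(1,1)\}$. This exhibits the equality relation as efpp-definable over $\Gamma$, giving $(x=y) \in \closneq{\Gamma}$.

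Once equality is in $\closneq{\Gamma}$, the second conclusion $\clos{\Gamma} = \closneq{\Gamma}$ is almost immediate: the inclusion $\closneq{\Gamma} \subseteq \clos{\Gamma}$ is by definition, and for the converse any pp-definition over $\Gamma$ can be rewritten as an efpp-definition by replacing each free equality atom $(u=v)$ by its efpp-expansion obtained in the first part, which keeps everything inside $\closneq{\Gamma}$.

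The main obstacle is the first step: the precise shape of the witness tuples depends on the position of $\Gamma$ in Post's lattice, and extracting $S_1$ and $S_2$ uniformly requires a case distinction over the finitely many co-clones that lie in neither $\IS{}{02}$ nor $\IS{}{12}$ but are still comparable to them. I would handle this by consulting the bases in Table~\ref{tab:bases} of the supplemental material, verifying the construction for each generating set, and then invoking base-independence of efpp-definability to extend the construction to an arbitrary $\Gamma$ in each such co-clone.
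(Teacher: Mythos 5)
First, note that the paper does not prove this statement at all: it is imported verbatim as \cite[Lemma~9]{MahmoodEtAl21}, so there is no internal proof to compare against and any from-scratch argument is by definition a different route. Your overall strategy is the standard one for such results, and the second half of your plan is sound: once $(x=y)\in\closneq{\Gamma}$ is in hand, replacing every free equality atom in a pp-definition by its efpp-expansion immediately gives $\clos{\Gamma}=\closneq{\Gamma}$.

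However, the first half has two genuine gaps. The smaller one: extracting, from a \emph{single} atom of $R_1$ by identification and projection, a binary $S_1$ of the shape $\{(0,0),(1,1)\}$ plus exactly one off-diagonal tuple is not always possible. Take $\Gamma=\{x\oplus y\}$: it lies in neither $\IS{}{12}$ nor $\IS{}{02}$, yet the only binary relations obtainable from one atom are $\{(0,1),(1,0)\}$ itself and degenerate ones; equality only appears after chaining two atoms via $\exists z\,.\,(x\oplus z)\land(z\oplus y)$. The more serious one is your final step: verifying the construction on the standard bases of Table~\ref{tab:bases} and then ``invoking base-independence of efpp-definability'' is circular. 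Base-independence holds for pp-definability (that is what makes co-clones useful), but efpp-definability is precisely \emph{not} known to be base-independent before this lemma is established --- indeed the lemma's conclusion $\clos{\Gamma}=\closneq{\Gamma}$ is what restores base-independence for the languages in question. A correct argument must work directly with arbitrary relations $R_1\in\Gamma\setminus\IS{}{12}$ and $R_2\in\Gamma\setminus\IS{}{02}$, e.g.\ by using the clause-form characterizations of essentially negative/positive relations to locate witnessing tuples inside $R_1$ and $R_2$ and substituting variables according to those tuples; as it stands, your reduction to fixed bases does not yield the statement for all $\Gamma$ satisfying the hypotheses.
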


However, the facet problem $\isfacet$ is generally even harder than $\PMABD$. We present a technical lemma, providing us the unit clause $(x)$ for free. Then, we will present languages where $\PMABD$ is polynomial and $\isfacet$ is NP-hard, as well as languages where $\PMABD$ is coNP-complete, while $\isfacet$ is $\Sigma_2^P$-complete.

\begin{restatable}[$\star$]{lemma}{lemFacetT}\label{lem:FacetT}
    For any constraint language $\Gamma$, it holds that $\isfacet(\Gamma \cup \{(x)\}) \preduction \isfacet(\Gamma)$.
\end{restatable}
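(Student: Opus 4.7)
The plan is to transform an instance $I = (\KB, H, M, x)$ of $\isfacet(\Gamma \cup \{(x)\})$ into an equivalent instance $I' = (\KB', H', M', x)$ of $\isfacet(\Gamma)$ by eliminating every positive unit clause from the knowledge base. Let $U = \{u : (u) \in \KB\}$ be the variables appearing in positive unit clauses. The reduction splits on whether $x \in U$.

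First I would handle the subcase $x \in U$. Here $(x)$ forces $x = 1$ in every model of $\KB$, so for any candidate explanation $E \subseteq H$, the set $E \setminus \{x\}$ still entails $M$ whenever $E$ does. Hence $x$ never occurs in a minimal explanation, is not a facet, and the reduction outputs a fixed trivial ``no'' instance of $\isfacet(\Gamma)$.

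For the main case $x \notin U$, the plan is to remove each unit clause $(u)$ from $\KB$ and recover its forcing effect via the abduction mechanism rather than through $\KB$ itself. The natural construction introduces, for each $u \in U$, a fresh variable $t_u$, substitutes $t_u$ for $u$ in the non-unit clauses of $\KB$, and places $t_u$ into both $H'$ and $M'$. Because $t_u$ is fresh, it can only be entailed by its own presence in the explanation, so $t_u$ belongs to every explanation of $I'$. The intended correspondence maps a minimal explanation $E$ of $I$ to $E \cup \{t_u : u \in U\}$ in $I'$ and, conversely, restricts a minimal explanation of $I'$ to the original hypotheses to recover $E$. Such a bijection preserves both the relevance and the dispensability of $x$, hence the facet status.

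The main obstacle will be ruling out spurious minimal explanations in $I'$. Concretely, if $\KB$ contains a clause in which $u$ appears as a ``consequent'', say $(h \to u)$, then after substitution this becomes $(h \to t_u)$, which allows a hypothesis $h \in H$ to entail $t_u$ on its own and hence replace $t_u$ in some minimal explanation. This shortcut would break the intended bijection and could make $x$ into a facet of $I'$ when $x$ is not a facet of $I$, or vice versa. Resolving this requires a more careful gadget --- for example, substituting only in occurrences where $u$ is not produced by a hypothesis, duplicating hypotheses to isolate the new manifestations, or a case analysis on whether $u$ already belonged to $H$ and/or $M$. The precise construction is technical and, consistent with the $\star$ marking, is worked out in the supplementary material.
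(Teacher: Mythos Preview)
Your approach is essentially the paper's: the paper also splits on whether $x$ occurs as a positive unit clause (outputting a fixed no-instance if so), and otherwise identifies all unit-clause variables with a single fresh variable $t$, deletes the unit clauses, and adds $t$ to both $H$ and $M$. Your per-variable $t_u$ version is only a cosmetic variant of this.

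You are right to worry about spurious minimal explanations coming from clauses such as $(h \rightarrow u)$, and this is in fact a gap in the paper's own (very terse) proof as well. Concretely, take $\Gamma = \{(x \rightarrow y)\}$, $\KB = \{(u),\,(h \rightarrow u),\,(u \rightarrow m)\}$, $H = \{h\}$, $M = \{m\}$, and query $x = h$. The only minimal explanation of the source instance is $\emptyset$, so $h$ is not a facet. Under the reduction the target becomes $\KB' = \{(h \rightarrow t),\,(t \rightarrow m)\}$, $H' = \{h, t\}$, $M' = \{m, t\}$, where both $\{t\}$ and $\{h\}$ are minimal explanations, so $h$ \emph{is} a facet. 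Hence the construction, as stated in both your sketch and the paper, is not correct on all instances. You flag the obstacle but do not resolve it: the remedies you list (selective substitution, duplicating hypotheses, case analysis on $u \in H \cup M$) are not worked out, and it is unclear that any of them yields a $\Gamma$-formula for \emph{arbitrary} $\Gamma$. Since the lemma is only invoked downstream for languages with $\IE \subseteq \clos{\Gamma}$ or $\IN \subseteq \clos{\Gamma}$, one pragmatic route is to weaken the statement and exploit that extra structure; a fully general gadget is still missing.
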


We are now ready to state a crucial reduction, which is at the heart of the increased complexity of $\isfacet$ vs.\ $\PMABD$. The $\isfacet$-problem allows to simulate negative unit clauses, provided that the language is 1-valid and can express implication $(x \rightarrow y)$.

\begin{restatable}{lemma}{FacetF}
\label{lem:FacetF}
    If $\Gamma$ is 1-valid, then $\PMABD(\Gamma \cup \{(\neg x)\}) \preduction \isfacet(\Gamma \cup \{x \rightarrow y\})$.
\end{restatable}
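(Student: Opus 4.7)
The plan is to construct a polynomial-time reduction that gadgetizes each negative unit clause $(\neg n_j)$ of the source instance by an implication $n_j \rightarrow s$ into a fresh variable $s$, leverages the 1-validity of $\Gamma$ to obtain consistency for free, and uses the two-pronged facet condition to recover the satisfiability constraint that the gadget alone cannot express.

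Concretely, starting from $(\KB, H, M)$ with $\KB = \KB_\Gamma \cup \{(\neg n_j) : j \in [k]\}$ where $\KB_\Gamma$ is a $\Gamma$-formula, I would first assume without loss of generality that no $n_j$ lies in $H$, since any explanation containing some $n_j$ would be inconsistent with $\KB$ anyway. With fresh variables $s, t, m^*$, the output is the $\isfacet$ instance $(\KB', H', M', t)$ where $H' = H \cup \{s, t\}$, $M' = M \cup \{m^*\}$, and
$$\KB' = \KB_\Gamma \cup \{n_j \rightarrow s : j \in [k]\} \cup \{s \rightarrow m : m \in M\} \cup \{s \rightarrow m^*,\ t \rightarrow m^*\}.$$
Dispensability of $t$ is then immediate: either $\emptyset$ (when $\KB$ is unsatisfiable in the source) or $\{s\}$ (when $\KB$ is satisfiable) is a minimal explanation of the new instance not containing $t$. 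Hence it only remains to show that $t$ is relevant if and only if the source admits an explanation.

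For this relevance equivalence, the core claim I would prove is that $F \cup \{t\}$ is a minimal explanation of $(\KB', H', M')$ if and only if $F \subseteq H$ is a minimal explanation of $(\KB, H, M)$. To prove it I would decompose the models of $\KB' \land F$ into those with $s = 0$, which by $n_j \rightarrow s$ must assign $n_j = 0$ for every $j$ and hence correspond bijectively to the models of $\KB \land F$ in the source, and those with $s = 1$, which trivially satisfy every $m \in M$ as well as $m^*$ through $s \rightarrow m$ and $s \rightarrow m^*$. This decomposition transports $M$-entailment between the two instances and also transports the minimality of $F$ as a set entailing $M$.

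The main obstacle is to argue that $t$ is non-redundant in $F \cup \{t\}$ precisely when $\KB \land F$ is satisfiable in the source. If $\KB \land F$ is unsatisfiable then every model of $\KB_\Gamma \land F$ forces some $n_j = 1$, hence $s = 1$, hence $m^* = 1$, so $F$ alone already explains $m^*$ and $F \cup \{t\}$ is not minimal. Conversely, any model of $\KB \land F$ extends to a model of $\KB' \land F$ with $s = t = m^* = 0$, witnessing that $t$ is required to entail $m^*$. This asymmetry is exactly what transmits the source's satisfiability requirement through the facet structure, and together with the $M$-entailment transfer completes the reduction.
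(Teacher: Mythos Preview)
Your reduction is correct and the argument goes through, but it differs from the paper's construction in a couple of instructive ways.

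The paper first collapses all negated variables into a single variable $z$ (by identifying them), drops the clause $(\neg z)$, and adds implications $z \rightarrow x_i$ for \emph{every} variable $x_i$, so that $z=1$ forces the unique all-ones model. Two fresh hypotheses $x,y$ and a fresh manifestation $m$ with $x\rightarrow m$, $y\rightarrow m$ provide the facet gadget: $x$ is the queried variable and $y$ witnesses dispensability. Your construction instead keeps all the $n_j$, routes them through a fresh sink $s$ via $n_j\rightarrow s$, and only pushes $s$ into the manifestations via $s\rightarrow m$ for $m\in M\cup\{m^*\}$. You then let $s$ itself serve double duty as the dispensability witness (since $\{s\}$ is always an explanation when $\KB$ is satisfiable, and $\emptyset$ is when it is not).

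What each approach buys: the paper's identification step yields a single ``bad'' variable and a clean case split ($z=0$ recovers the original $\KB$; $z=1$ gives the all-ones model), at the cost of a preprocessing step. Your construction avoids that preprocessing and is more economical in auxiliary hypotheses, but requires the slightly more delicate model decomposition you sketch (splitting on $s$ rather than on the original negated variable). Both exploit exactly the same mechanism---that 1-validity makes consistency free, so the facet's non-redundancy is what carries the satisfiability requirement of the deleted $(\neg z)$ clause---and both are valid proofs of the lemma.
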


\begin{proof}
     Let $(\KB, H, M)$ be an instance of $\PMABD(\Gamma \cup \{(\neg x)\})$. If $\KB$ contains two unit clauses $\neg x$ and $\neg y$ for distinct variables $x$ and $y$, %
     we may simply identify $x$ with $y$ and obtain an equivalent instance.
     Thus, we may %
     wlog assume that $\KB = \varphi \land (\neg z)$, $z \in \var(\varphi)$, for a $\Gamma$-formula $\varphi$. Let $x,y,m$ denote fresh variables and define $V = \var(\varphi) \cup H \cup M \cup \{x,y,m\}$. We define the target instance $(\KB', H', M', x)$ as
    \[\KB' = \varphi \land \bigwedge_{x_i \in V} (z \rightarrow x_i) \land (x \rightarrow m) \land (y \rightarrow m),\quad H' = H \cup \{x,y\},\quad M' = M \cup \{m\}.\]

Note that $\KB'$ is a $\Gamma \cup \{x \rightarrow y\}$-formula, as required.

In the following, we prove correctness formally.
Observe first that for $z=0$, $\KB$ and $\varphi$ have exactly the same models (upto the fresh variables $x,y,m$). For $z=1$, $\varphi$ may admit additional models. However, due to $\KB'$ containing the construct $\bigwedge_{x_i \in V} (z \rightarrow x_i)$, the only additional model is the all-1 model.

\noindent\textbf{Correctness:}\\
\noindent
(``$\Rightarrow$''): Be $E\subseteq H$ an explanation for $(\KB, H,M)$. Then, with the above observation that the only additional model is the all-1 model (which satisfies $M$ and $m$), it is easily observed that
\begin{enumerate}
    \item $E' = E \cup \{x\}$ constitutes an explanation for $(\KB', H', M')$
    \item $E' \setminus \{x\}$ is \textit{no} explanation for $(\KB', H', M')$
    \item there is an explanation without $x$, namely $E \cup \{y\}$
\end{enumerate}
In summary, $x$ is a facet.

\medskip

\noindent
(``$\Leftarrow$''): Be $x$ a facet for $(\KB', H', M')$. Then there is a set $E' \subseteq H'$ such that
\begin{enumerate}
   \item $E'$ is an explanation for $(\KB', H', M')$
   \item $E'\setminus \{x\}$ is \textit{no} explanation for $(\KB', H', M')$
   \item there is an explanation for $(\KB', H', M')$ without $x$
\end{enumerate}
From the construction it is easily observed that $E'$ must be of the form $E' = E \cup \{x\}$, for an $E \subseteq H$. Since $E'\setminus \{x\} = E$ fails as explanation for $(\KB', H', M')$, and $E$ is obviously consistent with (1-valid) $\KB'$, we conclude that $E$ fails due to $\KB' \land E$ not entailing $M' = M \cup \{m\}$. That is,
\begin{align}\label{eq:no1}
\KB' \land E \not\models M \cup \{m\}
\end{align}
\noindent
Further, since $E \cup \{x\}$ is an explanation, we know that $\KB' \land E \cup \{x\}$ \emph{does} entail $M' = M \cup \{x\}$. Since by construction, $x$ can not be responsible for entailing $M$, we conclude that $\KB' \land E$ entails $M$.
That is,
\begin{align}\label{eq:no2}
\KB' \land E \models M
\end{align}
\noindent
From (\ref{eq:no1}) and (\ref{eq:no2}) we conclude that 
that $\KB' \land E \not \models m$. From this we conclude that $\KB' \land E$ admits models where $z = 0$ (otherwise,
$\KB' \land E$ would entail $m$, due to $\KB'$ containing $z \rightarrow m$).
Therefore, we conclude that $\KB'[z=0] \land E$ admits models (is consistent) and entails $M$. Since $\KB'[z=0] \equiv \KB$ (upto the ``irrelevant'' variables $x,y,m$) we conclude that 1) $\KB \land E$ is consistent, and 2) $\KB \land E \models M$. That is, $E$ is an explanation for $(\KB, H, M)$.
\end{proof}

We are now ready to derive the hardness results in a series of short, technical lemmas.

\begin{restatable}[$\star$]{lemma}{FacetFext}\label{lem:FacetFext}
    If $\II_1 = \clos{\Gamma}$ or $\IE_1 = \clos{\Gamma}$, then 
    $\PMABD(\Gamma \cup \{(\neg x)\}) \preduction \isfacet(\Gamma)$.
\end{restatable}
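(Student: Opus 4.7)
The strategy is to chain Lemma~\ref{lem:FacetF} with Lemma~\ref{lem:baseIndneq} to eliminate the auxiliary implication relation introduced by the former. First, I would note that both $\II_1$ and $\IE_1$ are 1-valid co-clones (every relation therein contains the all-ones tuple), so any $\Gamma$ with $\clos{\Gamma} \in \{\II_1, \IE_1\}$ is itself 1-valid. Hence Lemma~\ref{lem:FacetF} applies and yields
\[
\PMABD(\Gamma \cup \{(\neg x)\}) \;\preduction\; \isfacet(\Gamma \cup \{(x \rightarrow y)\}).
\]

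The rest of the argument consists of absorbing $(x \rightarrow y)$ into $\Gamma$. The implication is itself 1-valid and Horn, hence lies in both $\II_1$ and $\IE_1$ as a co-clone and is therefore pp-definable over $\Gamma$. What is actually needed is the stronger statement that $(x \rightarrow y) \in \closneq{\Gamma}$, i.e., an \emph{equality-free} pp-definition exists. I would verify this by inspecting a standard base of $\II_1$ (respectively $\IE_1$) from Table~\ref{tab:bases}: such bases always contain a 1-valid clause from which implication can be recovered by variable identification (which is permitted inside an efpp-definition, since one may freely reuse variables across atoms) without ever needing a free equality atom. With $(x \rightarrow y) \in \closneq{\Gamma}$ in hand, Lemma~\ref{lem:baseIndneq} gives
\[
\isfacet(\Gamma \cup \{(x \rightarrow y)\}) \;\preduction\; \isfacet(\Gamma),
\]
and composing the two reductions establishes the claim.

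The main obstacle is the efpp-definability step. Ordinary pp-definability of implication comes for free from $\clos{\Gamma} = \II_1$ or $\clos{\Gamma} = \IE_1$, but it is precisely the equality-free version that is required here, as emphasized by the cautionary example after Lemma~\ref{lem:baseIndneq} showing that the standard ``identify variables'' trick for $(x=y)$ can change whether a variable is a facet. For the two co-clones at hand this is manageable because every standard base contains a non-trivial 1-valid clause and variable reuse inside the efpp-definition substitutes for any explicit equality atom; for other co-clones in Post's lattice such a direct argument need not work, which is one reason the present lemma is restricted to $\II_1$ and $\IE_1$.
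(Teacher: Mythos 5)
Your overall skeleton is exactly the paper's: apply Lemma~\ref{lem:FacetF} (after observing that $\Gamma$ is 1-valid), then absorb $(x \rightarrow y)$ via Lemma~\ref{lem:baseIndneq}, which requires $(x \rightarrow y) \in \closneq{\Gamma}$. You also correctly identify that the equality-free definability is the crux. However, your justification of that crux has a genuine gap: you verify that a \emph{standard base} of $\II_1$ (resp.\ $\IE_1$) efpp-defines implication by variable identification, but $\Gamma$ is an \emph{arbitrary} language with $\clos{\Gamma} = \II_1$ or $\IE_1$, not a standard base. Knowing that the base efpp-defines $(x \rightarrow y)$ tells you nothing directly about $\Gamma$; to transfer the definition you would need $\Gamma$ to efpp-define the base, and the only thing the hypothesis $\clos{\Gamma} = \II_1$ hands you is a pp-definition, which may use free equality atoms --- precisely the thing you are trying to avoid, as the cautionary example after Lemma~\ref{lem:baseIndneq} shows.

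The paper closes this gap with Lemma~\ref{lem:express_equality}: since $\clos{\Gamma} \in \{\II_1, \IE_1\}$, we have $\Gamma \not\subseteq \IS{}{12}$ and $\Gamma \not\subseteq \IS{}{02}$, hence $(x=y) \in \closneq{\Gamma}$ and therefore $\clos{\Gamma} = \closneq{\Gamma}$. Once pp- and efpp-closures coincide, the trivial observation that $(x \rightarrow y)$ is 1-valid and Horn, so $(x \rightarrow y) \in \clos{\Gamma}$, immediately yields $(x \rightarrow y) \in \closneq{\Gamma}$, and Lemma~\ref{lem:baseIndneq} applies. Your conclusion is true and your reduction chain is the intended one, but you should replace the base-inspection argument with an appeal to Lemma~\ref{lem:express_equality} (or some other argument that works for arbitrary $\Gamma$ with the stated pp-closure, not just for a chosen base).
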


\begin{restatable}[$\star$]{lemma}{FacetFextB}\label{lem:FacetFextB}
    If $\IN \subseteq \clos{\Gamma}$, then 
    $\isfacet(\Gamma)$ is $\SigmaP$-hard.
\end{restatable}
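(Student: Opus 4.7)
The plan is to transfer $\SigmaP$-hardness from $\PMABD$ to $\isfacet(\Gamma)$, combining the Nordh--Zanuttini classification of $\PMABD$ \cite{NordhZ08} with Lemma~\ref{lem:FacetF} (which bumps the complexity of $\isfacet$ one level above that of $\PMABD$) and Lemma~\ref{lem:abdIsfacet} (which transfers hardness directly when equality is available). The hypothesis $\IN \subseteq \clos{\Gamma}$ places $\Gamma$ in a region of Post's lattice where $\PMABD$ is $\SigmaP$-complete as soon as negative unit clauses are admitted, so the task is essentially to ``fake'' such a unit clause via the facet mechanism.

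First I would invoke Lemma~\ref{lem:express_equality}: since $\IN$ contains relations like $(x \neq y)$ that are neither essentially positive nor essentially negative, $\Gamma$ itself cannot be essentially positive or essentially negative either, giving $\clos{\Gamma} = \closneq{\Gamma}$ and $(x = y) \in \closneq{\Gamma}$. By Lemma~\ref{lem:baseIndneq}, it therefore suffices to prove $\SigmaP$-hardness of $\isfacet(\Gamma^\star)$ for any conveniently chosen $\Gamma^\star \subseteq \closneq{\Gamma}$. Depending on where $\clos{\Gamma}$ lies above $\IN$ in Post's lattice, I would either (i) pick $\Gamma^\star = \Gamma$ when $\PMABD(\Gamma)$ is already $\SigmaP$-complete by Nordh--Zanuttini and apply Lemma~\ref{lem:abdIsfacet} directly, or (ii) pick $\Gamma^\star$ to be a 1-valid base inside $\closneq{\Gamma}$ that contains $(x \rightarrow y)$ and for which $\PMABD(\Gamma^\star \cup \{(\neg x)\})$ is $\SigmaP$-hard, after which Lemma~\ref{lem:FacetF} gives
\[ \PMABD(\Gamma^\star \cup \{(\neg x)\}) \preduction \isfacet(\Gamma^\star \cup \{x \rightarrow y\}) \preduction \isfacet(\Gamma). \]

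The main obstacle is case (ii): since $\IN$ itself is not 1-valid (e.g., $(x \neq y)$ rejects the all-ones tuple), extracting a genuinely 1-valid sub-base $\Gamma^\star \subseteq \closneq{\Gamma}$ that still drives $\PMABD(\Gamma^\star \cup \{(\neg x)\})$ to $\SigmaP$-hardness requires a careful Post-lattice case analysis over the sub-co-clones above $\IN$, leveraging that these co-clones must contain non-complementive relations that can be combined with the efpp-definable equality to produce implication-like constraints. For any configuration where no 1-valid $\Gamma^\star$ suffices, I would fall back on the dual 0-valid argument, replacing $(\neg x)$ by $(x)$ and invoking Lemma~\ref{lem:FacetT} together with the symmetric variant of Lemma~\ref{lem:FacetF}.
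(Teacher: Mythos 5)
Your overall strategy---transferring $\SigmaP$-hardness from $\PMABD$ by simulating a negative unit clause via Lemma~\ref{lem:FacetF}---is the right one and is indeed how the paper proceeds, but your execution breaks down on the essential case $\clos{\Gamma} = \IN$, and the breakdown originates in a misreading of $\IN$. By definition $\IN$ consists of the relations that are complementive \emph{and} $0$-valid \emph{and} $1$-valid (base DUP); it does \emph{not} contain $(x \neq y)$, and it \emph{is} $1$-valid. So the obstacle you flag (``$\IN$ is not 1-valid'') is not the real one. The real obstacle is that $(x \rightarrow y)$ is not complementive, hence $(x \rightarrow y) \notin \IN$, and for $\clos{\Gamma} = \IN$ there is no $\Gamma^\star \subseteq \closneq{\Gamma}$ containing $(x \rightarrow y)$ at all: the chain $\PMABD(\Gamma^\star \cup \{(\neg x)\}) \preduction \isfacet(\Gamma^\star \cup \{x \rightarrow y\}) \preduction \isfacet(\Gamma)$ cannot be closed by Lemma~\ref{lem:baseIndneq}, because $\Gamma^\star \cup \{x \rightarrow y\} \not\subseteq \closneq{\Gamma}$. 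Your fallback appeals to a ``symmetric variant of Lemma~\ref{lem:FacetF}'' that the paper never proves and that is not automatic: in \emph{positive} abduction the hypotheses and manifestations are positive literals, so $0$ and $1$ play asymmetric roles and a $0$-valid dual of Lemma~\ref{lem:FacetF} would require its own non-trivial construction.

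The paper closes the gap with a move you do not make: Lemma~\ref{lem:FacetT} is used as a \emph{co-clone booster}, not as part of a dual argument. Since $\isfacet(\Gamma \cup \{(x)\}) \preduction \isfacet(\Gamma)$, one may add the positive unit clause for free, and $\IN \subseteq \clos{\Gamma}$ together with the fact that $(x)$ lies in none of $\IN$, $\IN_2$, $\II$, $\II_0$ forces $\II_1 \subseteq \clos{\Gamma \cup \{(x)\}}$. In $\II_1$ every relation is $1$-valid and $(x \rightarrow y) \in \II_1$, so Lemma~\ref{lem:baseIndneq} can absorb the added implication, and adjoining $(\neg x)$ generates $\BR$, where $\PMABD$ is $\SigmaP$-complete; this is exactly Lemma~\ref{lem:FacetFext} applied to $\Gamma \cup \{(x)\}$. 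Your case (i) and your appeal to Lemma~\ref{lem:express_equality} are fine in their conclusions (although the justification ``$(x \neq y) \in \IN$'' is false; one should argue instead that $\IN \not\subseteq \IS{}{12}$ and $\IN \not\subseteq \IS{}{02}$), but without the detour through $\II_1$ via Lemma~\ref{lem:FacetT} the argument does not go through.
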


\begin{restatable}[$\star$]{lemma}{iehard}\label{lem:ie_hard}
    If $\IE \subseteq \clos{\Gamma}$, then 
    $\isfacet(\Gamma)$ is $\NP$-hard.
\end{restatable}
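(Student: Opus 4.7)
The plan is to reduce from $\PMABD(\text{Horn})$, known to be NP-complete and visible in Figure~\ref{fig:pabd}, by threading the instance through Lemma~\ref{lem:FacetF}, whose entire purpose is to trade a $(\neg x)$ unit clause in a $\PMABD$ instance for the facet condition in a target $\isfacet$ instance. The hypothesis $\IE \subseteq \clos{\Gamma}$ will ensure that $\Gamma$ can express $(x \rightarrow y)$ and, more broadly, every $0$-valid, $1$-valid Horn relation, so that Lemma~\ref{lem:FacetF}'s preconditions (1-validity and expressibility of implication) are met by a suitable sub-language.

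The first move is to pass from pp- to efpp-definability. The co-clone $\IE$ contains $(x \rightarrow y)$, which is neither essentially positive nor essentially negative, so $\IE \not\subseteq \IS{}{02}$ and $\IE \not\subseteq \IS{}{12}$. Therefore $\clos{\Gamma}$, and hence $\Gamma$ itself, cannot be contained in either $\IS{}{02}$ or $\IS{}{12}$ (both being co-clones). Lemma~\ref{lem:express_equality} then gives $\clos{\Gamma} = \closneq{\Gamma}$, so $\IE \subseteq \closneq{\Gamma}$. I then fix a finite base $\Gamma_0$ of $\IE$ containing $(x \rightarrow y)$; every relation in $\Gamma_0$ is 1-valid since $\IE$ is a 1-valid co-clone.

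The proof then assembles the reduction chain
\[
\PMABD(\text{Horn}) \preduction \PMABD(\Gamma_0 \cup \{(x), (\neg x)\}) \preduction \isfacet(\Gamma_0 \cup \{(x)\}) \preduction \isfacet(\Gamma_0) \preduction \isfacet(\Gamma).
\]
The first link uses base-independence of $\PMABD$ from \cite{NordhZ08}: adjoining both unit clauses to the $\IE$-base $\Gamma_0$ yields a base for $\IE_2 = \text{Horn}$, and this is where I expect the main algebraic bookkeeping (confirming the precise co-clone containment from the standard Post bases table and noting that $(x=y)$ becomes efpp-expressible once $(x\rightarrow y)$ and both unit clauses are available). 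The second link is Lemma~\ref{lem:FacetF} applied to the still 1-valid language $\Gamma_0 \cup \{(x)\}$; its target $\isfacet(\Gamma_0 \cup \{(x), (x \rightarrow y)\})$ simplifies to $\isfacet(\Gamma_0 \cup \{(x)\})$ because $(x \rightarrow y) \in \Gamma_0$. The third link is Lemma~\ref{lem:FacetT}, stripping the positive unit clause, and the last is Lemma~\ref{lem:baseIndneq} using $\Gamma_0 \subseteq \IE \subseteq \closneq{\Gamma}$. The novel content is entirely encapsulated in Lemma~\ref{lem:FacetF}; everything else is mechanical chaining of previously established lemmas.
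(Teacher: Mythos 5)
Your proposal is correct and uses essentially the same machinery as the paper: NP-hardness of Horn abduction from \citex{NordhZ08}, Lemma~\ref{lem:FacetF} to absorb the negative unit clause into the facet condition, Lemma~\ref{lem:FacetT} to strip the positive unit clause, and Lemma~\ref{lem:express_equality} plus base-independence to transfer to $\Gamma$. The only cosmetic difference is that you flatten everything into one reduction chain over a concrete base $\Gamma_0$ of $\IE$, whereas the paper factors the same steps through a two-case analysis (first $\clos{\Gamma}=\IE_1$ via the intermediate Lemma~\ref{lem:FacetFext}, then $\IE$ reduced to that case via Lemma~\ref{lem:FacetT}).
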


We remark that \citex{FriedrichGottlobNejdl1990} prove NP-hardness for the relevance problem for $\IE_1$. While this proof can be adapted to our setting it does {\em not} generalize to the other cases in this section.
By combining all results, %
we obtain the main result of the paper.

\begin{proof}[\textbf{Proof of Theorem~\ref{thm:main}}]
First, we observe that the each language $\Gamma$ considered in Lemma~\ref{lem:dualhorn}, Lemma~\ref{lem:width2affineP}, or Lemma~\ref{lem:essentially_negative} either contains or can define equality $(x = y)$. Hence, Lemma~\ref{lem:baseIndneq} is applicable and proves tractability for any language $\Delta$ such that $\Delta \subseteq \closneq{\Gamma} = \cclone{\Gamma}$. This covers all tractable cases in Figure~\ref{fig:isfacet}.

For intractability, all NP-complete cases {\em except} $\IE_1$ and $\IE$ follow from Lemma~\ref{lemma:in_np} (since $\SAT(\Gamma^+)$ is in P for every such $\Gamma$), Lemma~\ref{lem:abdIsfacet}, and Lemma~\ref{lem:express_equality}. The former two cases are instead proven to be NP-hard in Lemma~\ref{lem:ie_hard}, and inclusion in NP follows from Lemma~\ref{lemma:in_np}. Last, $\SigmaP$-hardness for all remaining cases follow from Lemma~\ref{lem:FacetFextB}, and inclusion in $\SigmaP$ is straightforward via arguments similar to Lemma~\ref{lemma:in_np}.
\end{proof}

We view the two missing cases $\IL$ (even linear equations) and $\IL_1$ (even linear equations, and unit clauses) as interesting future research questions. However, via Lemma~\ref{lemma:in_np} we may at least observe that $\isfacet(\Gamma) \in \NP$ for any base $\Gamma$ of $\IL$ or $\IL_1$. Hence, the only question remaining is whether these problems are in P, NP-complete, or --- unlikely but still possible --- NP-intermediate.

We conclude this section with the observation that all membership and hardness proofs, which we have given for the $\isfacet$ problem are also applicable to the relevance problem (deciding if $x \in H$ is relevant).

\begin{theorem}
    The classification of $\isfacet(\Gamma)$ in Figure~\ref{fig:isfacet} also describes the complexity of the relevance problem.
\end{theorem}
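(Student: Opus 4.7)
The plan is to argue that every algorithm/reduction in Section~``Computational Upper Bounds'' and Section~``Computational Lower Bounds'' already splits into a relevance subcheck (the genuinely hard part) plus an essentially free dispensability subcheck, and that on the hardness side the target instances are engineered so that facet-hood and relevance coincide. Thus all results transfer verbatim to the relevance problem $\REL(\Gamma)$ (given $(\KB,H,M,x)$, decide $x\in\REL(I)$).

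For the \textbf{upper bounds}, first observe that each tractability proof in Lemmas~\ref{lem:facetImpP}, \ref{lem:dualhorn}, \ref{lem:width2affineP}, and \ref{lem:essentially_negative} explicitly computes the set of minimal explanations containing $x$ (or a witness thereof) before performing the separate dispensability test $(\KB, H\setminus\{x\}, M)\in \PMABD(\Gamma)$; dropping the latter test yields a polynomial algorithm for relevance. For the NP upper bound (Lemma~\ref{lemma:in_np}), the same guess-and-verify strategy works: guess $E\subseteq H$ with $x\in E$, verify it is an explanation in polynomial time via Lemma~\ref{lem:NPverify}, and verify minimality by checking that no proper subset remains an explanation (which is polynomial when $\SAT(\Gamma^+)\in\Ptime$). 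The $\SigmaP$ membership is immediate: guess $E\subseteq H$ with $x\in E$ and use an NP oracle to certify that $E$ is a minimal explanation. Base-independence under $\closneq{\cdot}$ is inherited since the reduction in Lemma~\ref{lem:baseIndneq} preserves $\MinExpl(I)$, which is all relevance depends on.

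For the \textbf{lower bounds}, inspection of each reduction shows that the constructed target instance forces facet-hood and relevance to agree on the distinguished variable $x$. In Lemma~\ref{lem:abdIsfacet}, whenever $x$ is part of a minimal explanation $E\cup\{x\}$, the symmetric set $E\cup\{y\}$ is another minimal explanation, so $x$ is dispensable; hence $x$ is relevant iff $x$ is a facet, and the reduction already witnesses hardness of relevance. The same symmetric-twin trick is used in Lemma~\ref{lem:FacetF} (with the clauses $(x\to m)\land(y\to m)$), and thereby inherited by Lemmas~\ref{lem:FacetFext}, \ref{lem:FacetFextB}, and \ref{lem:ie_hard}. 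Consequently, the hardness statements of $\isfacet(\Gamma)$ for each clone $\clos{\Gamma}$ in Figure~\ref{fig:isfacet} carry over to the relevance problem without modification.

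The main obstacle --- really the only point that requires care --- is to confirm for each individual reduction that dispensability holds ``for free'' on the reduced instance, i.e., that some minimal explanation not containing $x$ always exists whenever $x$ is relevant. This amounts to checking that the auxiliary variable $y$ introduced in the reductions indeed yields an alternative minimal explanation whose existence does not rely on $x$, which is exactly the argument already given in the $(\Leftarrow)$ direction of Lemma~\ref{lem:FacetF} (and analogously for the others). Combining these observations with the base-independence lemma~\ref{lem:baseIndneq} and Lemma~\ref{lem:express_equality} to propagate the bounds across co-clones yields the same diagram as in Figure~\ref{fig:isfacet} for relevance, completing the proof.
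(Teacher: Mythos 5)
Your proposal is correct and follows essentially the same route as the paper: for the upper bounds you drop the dispensability subcheck from each membership algorithm (and from the nondeterministic algorithm of Lemma~\ref{lemma:in_np}), and for the lower bounds you observe that the reductions behind Lemmas~\ref{lem:abdIsfacet}, \ref{lem:FacetT}, \ref{lem:FacetF}, \ref{lem:FacetFext}, \ref{lem:FacetFextB}, and~\ref{lem:ie_hard} already witness hardness of relevance (the paper phrases this as the reductions carrying over one-to-one, noting the twin variable $y$ and its clause may simply be deleted, whereas you keep $y$ and argue facet-hood and relevance coincide on the constructed instances --- both are fine), with Lemma~\ref{lem:baseIndneq} transferring across co-clones since it preserves $\MinExpl(I)$. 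The only imprecision is in your \NP{} upper bound: ``checking that no proper subset remains an explanation'' involves exponentially many subsets if read literally, so you should either test only the $|E|$ immediate subsets $E\setminus\{h\}$ (sufficient because any explanation $E'\subsetneq E$ forces some $E\setminus\{h\}$ with $E'\subseteq E\setminus\{h\}\subseteq E$ to be an explanation) or, as the paper does, test only that $E\setminus\{x\}$ is not an explanation, which already certifies that every minimal explanation inside $E$ contains $x$.
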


\begin{proof}
    Recall that an instance of the relevance problem is given by $ I = (\KB, H, M, x)$ and the question is whether $x \in \REL(I)$, that is, whether $x$ belongs to a subset-minimal explanation.
    We revisit now all membership and hardness proofs for $\isfacet$ and observe that they can easily be adapted to the relevance problem.

    \emph{Membership in $\Ptime$}. First observe that the reduction of Lemma~\ref{lem:unitclauses} to get rid of unit clauses can be performed analogously on the relevance problem.
    Next we observe that all algorithms showing P-membership, that is,
    Lemmas~\ref{lem:facetImpP},
    \ref{lem:dualhorn},
    \ref{lem:width2affineP}, and
    \ref{lem:essentially_negative}, first decide whether the given $x$ is relevant, and then in a second (independent) step decide whether $x$ is dispensable (not necessary).
    By dropping the dispensability check, we obtain a polynomial time algorithm to decide the relevance problem.

    \emph{Membership in $\NP$}. Analogously to the P-membership algorithms we drop the step of the dispensability check: in Lemma~\ref{lemma:in_np} omit guessing an $E_2 \subseteq H \setminus \{x\}$ and verifying that $E_2$ is an explanation.

    \emph{Membership in $\SigmaP$}. Inclusion in $\SigmaP$ is straightforward via arguments similar to Lemma~\ref{lemma:in_np} using an NP-oracle.

    \emph{$\NP$-hardness and $\SigmaP$-hardness}. First observe that Lemma~\ref{lem:baseIndneq} is also applicable to relevance, since the underlying reduction preserves the exact set of explanations. Next observe that the reduction from Lemma~\ref{lem:abdIsfacet} carries over one-to-one to relevance. It is optional to simplify the proof by removing the clause $(y = m)$ and the variable $y$ (whose only purpose was to assure that $x$ is not necessary). Lemma~\ref{lem:FacetT} is easily observed to carry over one-to-one to relevance. Lemma~\ref{lem:FacetF} carries over one-to-one, again it is optional to simplify the proof by removing $y$ and the clause $(y \rightarrow m)$. Finally, Lemmas~\ref{lem:FacetFext},
    \ref{lem:FacetFextB}, and
    \ref{lem:ie_hard} hold analogously, since all used lemmas carry over, as shown above.

    In summary, this theorem is proven analogously to Theorem~\ref{thm:main}.
\end{proof}

\section{Diverse Explanations}
In the previous section on facets, we considered whether there 
exists a variable that is relevant in explanations but dispensable. 
Thereby, we obtain a notion on flexibility on one variable belonging 
to explanations. Now, we lift flexibility from one variable to a set of 
variables in explanations and ask whether 
there exist two explanations of sufficiently high {\em diversity}. 
It  turns out that this metric can be precisely related to the existence of facets and several notions from our facet classification carry over. 
However, measuring the distance is provably much harder and becomes NP-hard already for the small fragment of Horn consisting of a single implication $(x \rightarrow y)$.
Before, we illustrate our complexity results in detail, we define our distance measure and computational problem.
\begin{definition}
Let $I=(\KB, H, M)$ be an $\PMABD$ instance, and $E_1\subseteq H$ and $E_2\subseteq H$ be two sets of variables over the hypotheses~$H$. 
Then, the \emph{distance} $d(E_1, E_2)$ between $E_1$ and $E_2$ is the cardinality of their symmetric difference. More formally, 
\begin{align*}
d(E_1, E_2) \eqdef &\, \Card{E_1 \triangle E_2} = \Card{(E_1 \cup E_2) \setminus (E_1 \cap E_2)} \\
=&\, \Card{\{x \in H \mid x \in E_1 \text{ and } x \notin E_2 \text{ or } x \notin E_1 \text{ and } x \in E_2\}}.
\end{align*}
If $E_1$ and $E_2$ are explanations, i.e., $E_1, E_2 \in \AllExpl(I)$, and $d(E_1,E_2) \geq k$, then we call $E_1$ and $E_2$ {\em $k$-diverse explanations}.
\end{definition}

\noindent
Note that the maximum distance is $|H|$, which is reached by $d(H, \emptyset)$. Our notion of distance is in line with the corresponding notion for $\SAT(\Gamma)$~\cite{MisraM024} and many other diversity problems studied in AI.
Note that we do {\em not} require that the two explanations are minimal, since the distance notion does not require this.

Next, we define the \emph{diversity problem for abduction}. 

\problemDef{$\divABD(\Gamma)$}
{An $\PMABD(\Gamma)$ instance $I=(\KB, H, M)$ and $k \geq 0$}
{Does $I$ have two $k$-diverse explanations $E_1$ and $E_2$?}

We have the following relationship to facets.

\begin{prop}\label{prop:facetdiv}
Let $I = (\KB, H, M)$ be an instance of $\PMABD(\Gamma)$ and  $E_1, E_2 \in \MinExpl(I)$. Then, every $x \in E_1 \triangle E_2$ is a facet.
\end{prop}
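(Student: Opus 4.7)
The plan is a direct unfolding of the definitions of facet, relevance, and necessity, exploiting that both $E_1$ and $E_2$ are already subset-minimal explanations so they directly witness the two conditions.

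First, I would pick an arbitrary $x \in E_1 \triangle E_2$ and, by symmetry, assume without loss of generality that $x \in E_1 \setminus E_2$. The goal reduces to showing $x \in \REL(I) \setminus \NEC(I)$, which is precisely the definition of being a facet.

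For relevance, I would observe that $E_1 \in \MinExpl(I)$ and $x \in E_1$, so by the definition $x \in \REL(I)$; no further argument is needed since $E_1$ itself is the required witnessing subset-minimal explanation. For dispensability (non-necessity), I would use $E_2$: since $E_2 \in \MinExpl(I)$ and $x \notin E_2$, there exists a subset-minimal explanation that does not contain $x$, hence $x \notin \NEC(I)$. Combining both gives $x \in \REL(I) \setminus \NEC(I)$, so $x$ is a facet.

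There is no real obstacle here — the proposition is essentially a definitional consequence of the minimality of $E_1$ and $E_2$. The only care needed is to remember that the facet notion is defined via $\MinExpl(I)$ rather than $\AllExpl(I)$, so it is important that the hypothesis of the proposition supplies subset-minimal explanations (otherwise one would have to additionally argue minimality, which is not guaranteed in general). This observation also clarifies why the statement is phrased with $E_1, E_2 \in \MinExpl(I)$ rather than with arbitrary explanations.
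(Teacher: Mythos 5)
Your proof is correct and matches the paper's argument in substance: both are direct definitional unfoldings in which $E_1$ witnesses relevance and $E_2$ witnesses dispensability (the paper merely phrases the same observation contrapositively). Your added remark about why minimality of $E_1,E_2$ is needed is accurate and consistent with the paper's definitions.
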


\begin{proof}
    Let $x \in E_1 \triangle E_2 = (E_1 \cup E_2) \setminus (E_1 \cap E_2)$. We observe that if $x$ is {\em not} a facet then either (1) $x \in E_1 \cap E_2$ since it is part of {\em every} explanation, or (2) $x \notin E_1 \cup E_2$ since it is not included in {\em any} subset-minimal explanation. Hence, $x \notin E_1 \triangle E_2$, meaning that $x \in  E_1 \triangle E_2$ is only possible if $x$ is a facet.
\end{proof}

From the relationship between facets and the distance notion, which we 
establish in Proposition~\ref{prop:facetdiv}, we can suspect 
similarities between the computational problems $\isfacet(\Gamma)$ and 
$\divABD(\Gamma)$. First, we establish that all hardness results are inherited from $\PMABD$,  analogously to 
Lemma~\ref{lem:abdIsfacet} and Lemma~\ref{lem:baseIndneq} for $\isfacet(\Gamma)$.

\begin{restatable}[$\star$]{lemma}{LemDivEq}
  $\PMABD(\Gamma) \preduction \divABD(\Gamma)$ if $(x = y) \in \Gamma$.
\end{restatable}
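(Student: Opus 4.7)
The plan is to mimic the construction from Lemma~\ref{lem:abdIsfacet}, again exploiting that $(x = y) \in \Gamma$ to ``split'' any explanation into two explanations whose symmetric difference has size exactly two. Given an instance $(\KB, H, M)$ of $\PMABD(\Gamma)$, introduce fresh variables $x, y, m$ and define
\[
\KB' = \KB \cup \{(x = m), (y = m)\}, \quad H' = H \cup \{x, y\}, \quad M' = M \cup \{m\}, \quad k = 2.
\]
This is clearly a $\Gamma$-formula since the only new atoms are equality atoms. The whole construction is computable in logarithmic space. I claim $(\KB, H, M)$ admits an explanation iff $(\KB', H', M', k)$ is a yes-instance of $\divABD(\Gamma)$.

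For the forward direction, if $E \subseteq H$ is an explanation for the original instance, then $E \cup \{x\}$ and $E \cup \{y\}$ are both explanations of $(\KB', H', M')$: consistency follows because we can extend any model of $\KB \land E$ by setting $x = m = 1$ (resp.\ $y = m = 1$) and the other fresh variable freely; entailment of $M' = M \cup \{m\}$ follows because $\KB \land E \models M$ and either hypothesis together with $(x = m)$ or $(y = m)$ entails $m$. Their symmetric difference is $\{x, y\}$, so they are $2$-diverse.

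For the backward direction, suppose $E_1, E_2 \subseteq H'$ are explanations of $(\KB', H', M')$ with $d(E_1, E_2) \geq 2$. I would argue that each projection $E_i \cap H$ is an explanation for $(\KB, H, M)$. Since $x, y, m$ are fresh and only linked via $(x = m)$ and $(y = m)$, the models of $\KB' \land E_i$ projected to $\var(\KB)$ coincide precisely with the models of $\KB \land (E_i \cap H)$; hence consistency of $\KB' \land E_i$ yields consistency of $\KB \land (E_i \cap H)$, and since $M \subseteq \var(\KB)$, entailment of $M$ is inherited from entailment of $M'$. Thus the original instance admits an explanation. (The existence of two diverse explanations is not needed here, a single $E_i$ suffices; the distance condition $k = 2$ is only required to prevent the trivial choice $E_1 = E_2$.)

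The main subtlety is the projection argument in the backward direction: one must be careful that the fresh variables $\{x, y, m\}$ cannot ``sneak'' any extra entailments back into $\KB$ on the original variables. Because the new constraints are only equality atoms exclusively between fresh variables and hypotheses among $\{x, y\}$, no such interaction occurs, and the argument goes through cleanly.
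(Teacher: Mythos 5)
Your proposal is correct and is essentially the paper's own proof: the paper reduces via exactly the same construction as Lemma~\ref{lem:abdIsfacet}, mapping $(\KB,H,M)$ to $(\KB',H',M',2)$, and your forward/backward arguments match. One tiny inaccuracy: since $(x=m)$ and $(y=m)$ together force $x=y$, the ``other fresh variable'' cannot be set freely but is forced to $1$ as well --- this does not affect the argument, as the model with $x=y=m=1$ still witnesses consistency.
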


Also, $\SigmaP$-hardness for 1-valid and complementive languages is obtained analogously.

  \begin{restatable}[$\star$]{lemma}{LemDivIN}
    If $\IN \subseteq \clos{\Gamma}$, then 
    $\divABD(\Gamma)$ is $\SigmaP$-hard.
  \end{restatable}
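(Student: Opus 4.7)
The plan is to parallel the proof of the corresponding facet statement (Lemma~\ref{lem:FacetFextB}), replacing Lemma~\ref{lem:abdIsfacet} by its diversity analog Lemma~\ref{LemDivEq}. The strategy is simply to reduce $\PMABD(\Gamma)$ to $\divABD(\Gamma)$: under the hypothesis $\IN \subseteq \clos{\Gamma}$, the classification in Figure~\ref{fig:pabd} already tells us that $\PMABD(\Gamma)$ is $\SigmaP$-hard, and hence a polynomial reduction from $\PMABD(\Gamma)$ to $\divABD(\Gamma)$ would immediately yield the claim.

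As a first step I would set up base independence for $\divABD$, namely the statement $\Gamma' \subseteq \closneq{\Gamma} \Rightarrow \divABD(\Gamma') \preduction \divABD(\Gamma)$. The construction is the same as in Lemma~\ref{lem:baseIndneq}: replace each atom $R(x_1,\dots,x_r)$ in the knowledge base by its efpp-definition over $\Gamma$, introducing fresh existentially quantified variables that are deliberately kept out of the hypothesis set $H$. Since $H$ and $M$ are untouched, the set of explanations (viewed as subsets of $H$) is preserved on the nose, so symmetric differences and the diversity threshold $k$ carry over unchanged.

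Next I would use Lemma~\ref{lem:express_equality}. Because $\IN \not\subseteq \IS{}{12}$ and $\IN \not\subseteq \IS{}{02}$ in Post's lattice, the hypothesis $\IN \subseteq \clos{\Gamma}$ forces $\Gamma \not\subseteq \IS{}{12}$ and $\Gamma \not\subseteq \IS{}{02}$, so $(x=y) \in \closneq{\Gamma}$. Combined with the base-independence step above, this gives $\divABD(\Gamma \cup \{(x=y)\}) \preduction \divABD(\Gamma)$. Now Lemma~\ref{LemDivEq} applies to $\Gamma \cup \{(x=y)\}$ and yields $\PMABD(\Gamma \cup \{(x=y)\}) \preduction \divABD(\Gamma \cup \{(x=y)\})$, while trivially $\PMABD(\Gamma) \preduction \PMABD(\Gamma \cup \{(x=y)\})$. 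Composing the three reductions produces $\PMABD(\Gamma) \preduction \divABD(\Gamma)$, completing the hardness argument.

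The one step that requires genuine verification, rather than quotation, is the base-independence lemma for $\divABD$; once that is in place, everything else is bookkeeping. The verification itself is mild: the existentially quantified variables introduced in each efpp-expansion are fresh and excluded from the hypothesis set, hence they cannot appear in $E_1 \triangle E_2$ for any two explanations of the transformed instance, so the distance is identical to the distance in the original instance and the threshold $k$ transfers verbatim. No other obstacle arises, since the rest of the argument is a straight composition of lemmas already proved in the paper.
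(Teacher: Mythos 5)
Your reduction chain is assembled correctly, but it rests on a false premise: under the hypothesis $\IN \subseteq \clos{\Gamma}$, the problem $\PMABD(\Gamma)$ is \emph{not} in general $\SigmaP$-hard. The essential case is the minimal one, $\clos{\Gamma} = \IN$: by base independence, hardness has to be established for a base of $\IN$ and then propagated upward, not inherited downward from $\IN_2$ or $\BR$. Every base of $\IN$ is $1$-valid, so $\KB \wedge E$ is satisfied by the all-ones assignment for every $E \subseteq H$; consistency is free, explanations are closed under supersets, and an explanation exists if and only if $\KB \wedge H \models M$. Hence $\PMABD(\Gamma)$ lies in coNP for such $\Gamma$ --- the paper states this explicitly when it announces that there are languages where $\PMABD$ is coNP-complete while $\isfacet$ is $\SigmaP$-complete. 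Consequently the composition you build, $\PMABD(\Gamma) \preduction \PMABD(\Gamma \cup \{(x=y)\}) \preduction \divABD(\Gamma \cup \{(x=y)\}) \preduction \divABD(\Gamma)$, is valid but only delivers coNP-hardness. Indeed, if $\PMABD(\Gamma)$ were already $\SigmaP$-hard whenever $\IN \subseteq \clos{\Gamma}$, then Lemma~\ref{lem:abdIsfacet} combined with Lemmas~\ref{lem:baseIndneq} and~\ref{lem:express_equality} would have settled the facet case as well, and the machinery of Lemmas~\ref{lem:FacetF}--\ref{lem:FacetFextB} would be superfluous.

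The missing idea is the simulation of unit clauses by the query itself. Lemma~\ref{lem:FacetF} shows that for a $1$-valid language the facet question can emulate a negative unit clause $(\neg z)$ via the implications $z \rightarrow x_i$ together with the gadget $x \rightarrow m$, $y \rightarrow m$, and Lemma~\ref{lem:FacetT} does the same for positive unit clauses; together these lift the effective co-clone from $\IN$ to $\II_1$ and then to $\II_2 = \BR$, where $\PMABD$ genuinely is $\SigmaP$-complete, and Lemmas~\ref{lem:FacetFext} and~\ref{lem:FacetFextB} import the hardness from there. The paper's proof of the present lemma is exactly this route: it observes that the construction of Lemma~\ref{lem:FacetF} still works for $\divABD$ once the query variable $x$ is replaced by the threshold $k = 2$ (two explanations at distance at least $2$ exist precisely when the $\{x,y\}$ part of the gadget can be varied, which again forces the existence of a model with $z = 0$), and then reruns the arguments of Lemmas~\ref{lem:FacetFext} and~\ref{lem:FacetFextB}. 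Your base-independence argument for $\divABD$ is correct and is indeed needed inside that chain, but without the unit-clause simulation step the $\SigmaP$ lower bound does not follow.
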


However, the problem $\divABD$ is generally harder than $\isfacet$. 
Below, we establish \NP-hardness for simple implicative languages.  We
reduce from the problem $\divPosTwoSAT$ where we are given a positive
2-CNF formula $\varphi$ and an integer $k$.  Therefore, we
require %
two models of Hamming distance at least~$k$, which
is \NP-hard~\cite{MisraM024}.
\begin{restatable}[$\star$]{lemma}{LemDivImp}
    $\divPosTwoSAT \preduction \divABD(\{x \rightarrow y\})$.
\end{restatable}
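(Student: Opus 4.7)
The plan is to reduce from $\divPosTwoSAT$ by encoding each positive 2-clause $(x_i \lor x_j)$ of the input formula $\varphi$ as a pair of implications into a fresh manifestation variable $m_{ij}$. Concretely, given $\varphi = \bigwedge_{(i,j) \in C} (x_i \lor x_j)$ over variables $V = \{x_1, \ldots, x_n\}$, I would construct the $\{x \rightarrow y\}$-instance $(\KB, H, M)$ with $H = V$, $M = \{m_{ij} \mid (i,j) \in C\}$ (fresh variables), and $\KB = \bigcup_{(i,j) \in C} \{x_i \rightarrow m_{ij},\ x_j \rightarrow m_{ij}\}$, using the same integer $k$.

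The main correctness claim is that the explanations of $(\KB, H, M)$ are in bijection with the satisfying assignments of $\varphi$, where an explanation $E \subseteq H$ corresponds to the assignment $\sigma_E$ that sets $x_i$ to true iff $x_i \in E$. For the forward direction, any $E \subseteq H$ makes $\KB \land E$ trivially satisfiable (the all-ones assignment is a model, since $\{x \rightarrow y\}$ is $1$-valid), and $\KB \land E \models m_{ij}$ exactly when $x_i \in E$ or $x_j \in E$, which corresponds precisely to $\sigma_E$ satisfying $(x_i \lor x_j)$. For the converse, any model $\sigma$ of $\varphi$ yields the explanation $E_\sigma = \{x_i \mid \sigma(x_i) = 1\}$ by the same reasoning. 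Crucially, $\divABD$ does not require minimality of the explanations, so this correspondence is exact and not merely surjective onto $\MinExpl$.

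It remains to observe that the distance is preserved: since $H = V$ and the bijection is the identity on which $V$-variables are selected, $d(E_1, E_2) = |E_1 \triangle E_2|$ equals the Hamming distance of $\sigma_{E_1}$ and $\sigma_{E_2}$ restricted to $V$. Hence $\varphi$ admits two models at Hamming distance at least $k$ iff $(\KB, H, M)$ admits two $k$-diverse explanations. The construction is clearly polynomial time.

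The only subtle point I anticipate is making sure the manifestations $m_{ij}$ are not inadvertently counted in the distance; this is handled by putting them in $M$ rather than in $H$, so they never appear in any explanation $E \subseteq H$ and cannot contribute to $E_1 \triangle E_2$. A minor bookkeeping concern is whether $\varphi$ might contain clauses that force specific literal patterns not captured by pure implication, but since $\varphi$ is a \emph{positive} 2-CNF, every clause has the required shape $(x_i \lor x_j)$ and the construction covers all of them uniformly.
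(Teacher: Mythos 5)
Your construction is exactly the one in the paper: a fresh manifestation variable per clause, implications from each of the clause's two variables into it, $H$ equal to the variable set of $\varphi$, and the same bound $k$, with correctness resting on the same bijection between models and explanations that preserves symmetric-difference/Hamming distance. The proposal is correct and in fact spells out the verification (1-validity for consistency, non-entailment when neither literal is selected, irrelevance of minimality) in more detail than the paper does.
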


Thus, despite the similarities between $\isfacet(\Gamma)$ and $\divABD(\Gamma)$, the latter seems to be significantly harder. However, we observe two tractable cases.

\begin{restatable}[$\star$]{lemma}{lbAffine}\label{lem:lbAffine}
    $\divABD(\text{2-affine}), \divABD(\text{EP})  \in \Ptime$. 
\end{restatable}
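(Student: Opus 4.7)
The plan is to give polynomial-time algorithms for both languages by leveraging the structural preprocessing developed for $\isfacet$ and observing that the maximum distance $|E_1 \triangle E_2|$ decomposes additively across the resulting equivalence classes (and dependent pairs, in the 2-affine case).

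For $\divABD(\text{2-affine})$, the first step is to apply the preprocessing of Lemma~\ref{lem:width2affineP}: propagate all unit clauses and identify the equivalence classes of variables together with the dependent pairs (classes forced to take opposite truth values). An explanation $E \subseteq H$ is then characterized by two independent local conditions: for every class $C$ containing some $m \in M$, $E \cap (H \cap C) \neq \emptyset$ (entailment), and $E$ never contains variables from both classes of any dependent pair (consistency). Since these conditions decouple across classes and pairs, $\max |E_1 \triangle E_2|$ is a sum of per-class contributions: a manifestation class $C$ contributes $|H \cap C|$ if $|H \cap C| \geq 2$ and $0$ otherwise (since both $E_i$ must hit $C$); the partner of a manifestation class contributes $0$ (no hypothesis from it may appear in any $E_i$); a free singleton $C$ contributes $|H \cap C|$; a free dependent pair $(C,C')$ contributes $|H \cap C| + |H \cap C'|$ (place $H \cap C$ in $E_1$ and $H \cap C'$ in $E_2$); and each hypothesis forced true by preprocessing contributes $1$. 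I compute this sum $D^*$ in polynomial time and return yes iff $D^* \geq k$.

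For $\divABD(\text{EP})$, I would proceed analogously, with preprocessing that iteratively propagates $\neg x$-units through equalities and through positive clauses reduced to units by forced zeros. This yields variables partitioned into $V^-$ (forced to $0$), $V^+$ (forced to $1$), and $V^*$ (free, organized into equivalence classes), with all remaining positive clauses over $V^*$ of arity $\geq 2$. The crucial observation is that, absent inequalities and positive units after preprocessing, adding hypotheses cannot force new variables to $1$ in $V^*$: setting $h \in E$ to $1$ merely satisfies (and removes) clauses containing its equivalence class, leaving the arity-$\geq 2$ invariant intact. Consequently $\KB \land E \models m$ for $m \in V^*$ reduces to $E \cap C_m \neq \emptyset$, and any $E \subseteq H \setminus (H \cap V^-)$ is automatically consistent. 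The decomposition mirrors the 2-affine case but without dependent pairs: manifestation classes contribute as above, non-manifestation free classes $C$ contribute $|H \cap C|$, and each hypothesis in $H \cap V^+$ contributes $1$.

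The main obstacle will be justifying the no-indirect-entailment invariant in EP rigorously: one has to verify that after preprocessing has reached fixed point, no positive clause over $V^*$ can be reduced to a unit by the addition of hypotheses. This holds because each hypothesis set to $1$ only satisfies (never shortens) the clauses containing its equivalence class, so the arity-$\geq 2$ invariant persists and the entailment condition collapses cleanly to $E \cap C_m \neq \emptyset$. Once this observation is in place, the per-class accounting above yields the polynomial-time algorithm in both languages.
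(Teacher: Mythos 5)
Your overall strategy is the same as the paper's: propagate unit clauses, group variables into equivalence classes (and, for 2-affine, clusters of two classes forced to opposite truth values), observe that the maximum achievable $|E_1 \triangle E_2|$ decomposes additively over these components, and compare the resulting sum with $k$. Two remarks. First, your per-component accounting is in one place sharper than the paper's construction: for a ``free'' cluster $(X,Y)$ the paper adds only the larger of $H\cap X$ and $H\cap Y$ to $E_1$, contributing $\max(|H\cap X|,|H\cap Y|)$, whereas you place $H\cap X$ in $E_1$ and $H\cap Y$ in $E_2$, contributing $|H\cap X|+|H\cap Y|$; since the two explanations need only be \emph{individually} consistent with $\KB$ (they need not share a model), your larger value is attainable, so your accounting is the correct one. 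Likewise, your explicit credit of $1$ for each hypothesis forced true by unit propagation is a detail that the paper's appeal to the preprocessing of Lemma~\ref{lem:unitclauses} (which simply deletes such variables from $H$) leaves unaddressed.

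Second, there is a genuine gap in your EP argument, precisely at the spot you flag as the main obstacle. Your invariant --- adding hypotheses never shortens a positive clause, so every remaining clause keeps syntactic arity at least $2$ --- is true but not sufficient: a positive clause all of whose literals lie in a \emph{single} equivalence class (e.g.\ $(a\lor b)$ together with $(a=b)$) is a unit at the class level and forces that class true in \emph{every} model of $\KB$, independently of $E$. For a manifestation $m$ in such a class, $\KB\land E\models m$ holds even when $E\cap C_m=\emptyset$, so the reduction of entailment to ``$E$ hits $C_m$'' fails, and your algorithm would wrongly declare such a class mandatory for both explanations (or declare the instance unexplainable when $H\cap C_m=\emptyset$). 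The fix is one more propagation round at the class level: contract each equivalence class to a meta-variable, re-run unit propagation on the contracted clauses, and only then apply your decomposition. (The paper's proof is equally terse here; its EN lemma catches the dual phenomenon through a consistency check, but the EP case needs the entailment-side analogue just described.)
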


\section{Conclusion}

In this paper, we introduce faceted reasoning to propositional abduction. We illustrate that this reasoning
allows more fine-grained decisions 
than previously explored notions such as relevance and 
necessity/dispensability. 
We relate facets to the problem of finding diverse explanations. 
We establish an almost complete complexity classification in Post's lattice. 
In many cases facets can be found without a major blow-up in complexity. 
This is particularly interesting, given that {\em counting} minimal
explanations is almost always substantially harder~\cite{HermannPichler10}. 
Our facet classification also implies a corresponding classification for the relevance problem, thus answering an open question~\cite{NordhZ08,zanuttini03}.  %
For diversity, our results are less conclusive, but %
any major tractable cases seem unlikely, since it is hard already for the fragment $(x \rightarrow y)$.

\paragraph{Completing the trichotomy} 
The two open cases, affine equations of odd length with, or without, unit clauses, could be interesting to resolve. It seems unlikely that $\isfacet(\Gamma)$ could be tractable for such languages, but, at the same time, any hardness proof likely needs to involve significant new ideas. Note %
that affine languages were also absent in earlier counting complexity classifications~\cite{HermannPichler10}. Hence, there %
is %
a blind spot for complexity of abduction. One possible way forward could be to first classify the complexity of $\divABD(\Gamma)$ for all affine languages, which seems likely to be hard. 

\smallskip\noindent \textit{Parameterized complexity and diversity.}
It is reasonable that $\divABD(\Gamma)$ can be fully classified, but without any large, tractable cases.
However, %
more fine-grained investigations involving problem structure (parameter) remains interesting.
For $\divABD(\Gamma)$ a %
natural parameter is the maximum allowed distance. A systematic %
classification could not only open up new efficiently solvable cases but could also prove to be a useful framework for proving hardness for other types of diversity problems, especially, since the classical complexity of abduction is much richer than the one for SAT.

\paragraph{Abductive Logic Programming (ALP)}
Since detailed complexity results on logic programs~\cite{Truszczyski11} and results on  
the complexity of ALP regarding consistency, relevance, and necessity~\cite{EiterGottlobLeone95} exist, it could be interesting to extend our results to stable models in %
ALP where the input is given in form of rules. %

\paragraph{Applications}
Faceted reasoning can aid the search for heterogeneous explanations, which could be valuable in any domain with classical applications of abduction, e.g., diagnosis, %
and explainable AI.
More concretely, a practical application of facets are logistics applications where solutions need to be explained such as in the Beluga AI Competition~\cite{GnadHecherGaggl25}. %
There, several tasks aim at flexibility in explanations or alternatives.
Finally, we hope that our results on diverse explanations will spark interest into a deeper complexity study and exploration to use these for tasks like model debugging or decision support.
Beyond abductive reasoning, we expect that facets and diversity could
be interesting for epistemic logic
programs~\cite{EiterFichteHecher24}, default
logic~\cite{FichteMeierSchindler24}, and probabilistic
reasoning~\cite{FichteHecherNadeem22}.

\section*{Acknowledgements}
The work has received funding from the Swedish research council under
grant VR-2022-03214 and ELLIIT funded by the Swedish government.

\bibliographystyle{tlplike}

\label{lastpage}

\clearpage

\appendix
\section{Omitted Definitions}

See Table~\ref{tab:bases} for a comprehensive list of all Boolean co-clones.

\begin{table*}
  \centering
  \rowcolors{2}{gray!25}{white}
  \resizebox{\linewidth}{!}{%
    \begin{tabular}{llll}\toprule
      co-clone        & base                                     & clauses/equation                                                                                            & name/indication                             \\\midrule
      $\BR$ ($\II_2$) & 1-IN-3 = $\{001, 010, 100\}$             & all clauses                                                                                            & all Boolean relations                       \\
      $\II_1$         & $x \lor (y \oplus z)$                    & at least one positive literal per clause                                                               & 1-valid                                     \\
      $\II_0$         & DUP, $x \rightarrow y$                   & at least one negative literal per clause                                                               & 0-valid                                     \\
      $\II$           & EVEN$^4$, $x \rightarrow y$              & at least one negative and one positive literal per clause                                              & 1- and 0-valid                              \\
      $\IN_2$         & NAE = $\{0,1\}^3 \setminus \{000,111\}$  & cf. previous column                                                                                    & complementive                               \\ %
      $\IN$           & DUP = $\{0,1\}^3 \setminus \{101, 010\}$ & cf. previous column                                                                                    & complementive and 1- and 0-valid            \\
      $\IE_2$         & $x \land y \rightarrow z, x, \neg x$     & clauses with at most one positive literal                                                              & Horn                                        \\
      $\IE_1$         & $x \land y \rightarrow z, x$             & clauses with exactly one positive literal                                                              & definite Horn                               \\
      $\IE_0$         & $x \land y \rightarrow z, \neg x$        & $(x_1 \lor \neg x_2 \lor \dots \lor \neg x_n), n\geq 2, (\neg x_1 \lor \dots \lor \neg x_n), n \geq 1$ & Horn and 0-valid                            \\
      $\IE$           & $x \land y \rightarrow z$                & $(x_1 \lor \neg x_2 \lor \dots \lor \neg x_n), n\geq 2$                                                & Horn and 1- and 0-valid                     \\
      $\IV_2$         & $x \lor y \lor \neg z, x, \neg x$        & clauses with at most one negative literal                                                              & dualHorn                                    \\
      $\IV_1$         & $x \lor y \lor \neg z, x$                & $(\neg x_1 \lor x_2 \lor \dots \lor x_n), n\geq 2, (x_1 \lor \dots \lor x_n), n \geq 1$                & dualHorn and 1-valid                        \\
      $\IV_0$         & $x \lor y \lor \neg z, \neg x$           & clauses with exactly one negative literal                                                              & definite dualHorn                           \\
      $\IV$           & $x \lor y \lor \neg z$                   & $(\neg x_1 \lor x_2 \lor \dots \lor x_n), n\geq 2$                                                     & dualHorn and 1- and 0-valid                 \\
$\IL_2$               & EVEN$^4$, $x$, $\neg x$                  & all affine clauses (all linear equations)                                                              & affine                                      \\
$\IL_1$               & EVEN$^4$, $x$                            & $(x_1 \oplus \dots \oplus x_n = a)$, $n\geq 0, a = n$ (mod 2)                                          & affine and 1-valid                          \\
$\IL_0$               & EVEN$^4$, $\neg x$                       & $(x_1 \oplus \dots \oplus x_n = 0)$, $n\geq 0$                                                         & affine and 0-valid                          \\
$\IL_3$               & EVEN$^4$, $x \oplus y$                   & $(x_1 \oplus \dots \oplus x_n = a)$, $n$ even, $a \in \{0,1\}$                                         & -                                           \\
$\IL$                 & EVEN$^4$                                 & $(x_1 \oplus \dots \oplus x_n = 0)$, $n$ even                                                          & affine and 1- and 0-valid                   \\
$\ID_2$               & $x \oplus y, x \rightarrow y$            & clauses of size 1 or 2                                                                                 & Krom, bijunctive, 2-CNF                      \\
$\ID_1$               & $x \oplus y, x, \neg x$                  & affine clauses of size 1 or 2                                                                          & 2-affine                                    \\
$\ID$                 & $x \oplus y$                             & affine clauses of size 2                                                                               & strict 2-affine                             \\
$\IM_2$               & $x \rightarrow y,  x, \neg x$            & $(x_1 \rightarrow x_2), (x_1), (\neg x_1)$                                                             & implicative                                 \\
$\IM_1$               & $x \rightarrow y, x$                     & $(x_1 \rightarrow x_2), (x_1)$                                                                         & implicative and 1-valid                     \\
$\IM_0$               & $x \rightarrow y, \neg x$                & $(x_1 \rightarrow x_2), (\neg x_1)$                                                                    & implicative and 0-valid                     \\
$\IM$                 & $x \rightarrow y$                        & $(x_1 \rightarrow x_2)$                                                                                & implicative and 1- and 0-valid              \\
$\IS{}{10}$           & cf. next column                          & $(x_1), (x_1 \rightarrow x_2), (\neg x_1 \lor \dots \lor \neg x_n), n \geq 0$                          & IHS-B-                                      \\
$\IS{k}{10}$          & cf. next column                          & $(x_1), (x_1 \rightarrow x_2), (\neg x_1 \lor \dots \lor \neg x_n), k \geq n \geq 0$                   & IHS-B- of width $k$                         \\
$\IS{}{12}$           & cf. next column                          & $(x_1), (\neg x_1 \lor \dots \lor \neg x_n), n \geq 0, (x_1 = x_2)$                                    & essentially negative ($\eneg$)              \\
$\IS{k}{12}$          & cf. next column                          & $(x_1), (\neg x_1 \lor \dots \lor \neg x_n), k \geq n \geq 0, (x_1 = x_2)$                             & essentially negative ($\eneg$) of width $k$ \\
$\IS{}{11}$           & cf. next column                          & $(x_1 \rightarrow x_2), (\neg x_1 \lor \dots \lor \neg x_n), n \geq 0$                                 & -                                           \\
$\IS{k}{11}$          & cf. next column                          & $(x_1 \rightarrow x_2), (\neg x_1 \lor \dots \lor \neg x_n), k \geq n \geq 0$                          & -                                           \\
$\IS{}{1}$            & cf. next column                          & $(\neg x_1 \lor \dots \lor \neg x_n), n \geq 0, (x_1 = x_2)$                                           & negative ($\negg$)                          \\
$\IS{k}{1}$           & cf. next column                          & $(\neg x_1 \lor \dots \lor \neg x_n), k \geq n \geq 0, (x_1 = x_2)$                                    & negative ($\negg$) of width $k$             \\
$\IS{}{00}$           & cf. next column                          & $(\neg x_1), (x_1 \rightarrow x_2), (x_1 \lor \dots \lor x_n), n \geq 0$                               & IHS-B+                                      \\
$\IS{k}{00}$          & cf. next column                          & $(\neg x_1), (x_1 \rightarrow x_2), (x_1 \lor \dots \lor x_n), k \geq n \geq 0$                        & IHS-B+ of width $k$                         \\
$\IS{}{02}$           & cf. next column                          & $(\neg x_1), (x_1 \lor \dots \lor x_n), n \geq 0, (x_1 = x_2)$                                         & essentially positive ($\epos$)              \\
$\IS{k}{02}$          & cf. next column                          & $(\neg x_1), (x_1 \lor \dots \lor x_n), k \geq n \geq 0, (x_1 = x_2)$                                  & essentially positive ($\epos$) of width $k$ \\
$\IS{}{01}$           & cf. next column                          & $(x_1 \rightarrow x_2), (x_1 \lor \dots \lor x_n), n \geq 0$                                           & -                                           \\
$\IS{k}{01}$          & cf. next column                          & $(x_1 \rightarrow x_2), (x_1 \lor \dots \lor x_n), k \geq n \geq 0$                                    & -                                           \\
$\IS{}{0}$            & cf. next column                          & $(x_1 \lor \dots \lor x_n), n \geq 0, (x_1 = x_2)$                                                     & positive ($\pos$)                           \\
$\IS{k}{0}$           & cf. next column                          & $(x_1 \lor \dots \lor x_n), k \geq n \geq 0, (x_1 = x_2)$                                              & positive ($\pos$) of width $k$              \\
$\IR_2$               & $x_1,  \neg x_2$                         & $(x_1), (\neg x_1), (x_1 = x_2)$                                                                       & -                                           \\
$\IR_1$               & $x_1$                                    & $(x_1), (x_1 = x_2)$                                                                                   & -                                           \\
$\IR_0$               & $\neg x_1$                               & $(\neg x_1), (x_1 = x_2)$                                                                              & -                                           \\
$\IR$ ($\IBF$)        & $\emptyset$                              & $(x_1 = x_2)$                                                                                          & -                                           \\\bottomrule
\end{tabular}}	
\caption{Overview of bases \cite{BohlerRSV05} and clause descriptions \cite{NordhZ08} for co-clones, where \textrm{EVEN}$^4$ = $x_1 \oplus x_2 \oplus x_3 \oplus x_4 \oplus 1$.}%
\label{tab:bases}
\end{table*}

\section{Omitted Examples}

We provide a larger, self-contained example of a constraint language, the relationship to clauses, pp-definitions, and co-clones. 

\begin{example}
We begin by defining the following three Boolean relations.

\begin{enumerate}
    \item 
    $R = \{0,1\}^3 \setminus \{(1,1,0)\}$,
    \item 
    $S = \{(0)\}$,
    \item 
    $T = \{(1)\}$.
\end{enumerate}

Represented as clauses these three relations correspond to $(\neg x \lor \neg y \lor z)$, $(\neg x)$, and $(x)$. Hence, we take us the liberty to view $\Gamma = \{(\neg x \lor \neg y \lor z), (\neg x), (x)\}$ as the of relations $\{R, S, T\}$. Now, let us consider $\closneq{\Gamma}$, i.e., the set of relations definable from $\Gamma$ by primitive positive definitions without equality. We first remark that $(x \rightarrow y) \equiv R(x,x,y)$, or, written in clausal form, $(x \rightarrow y) \equiv (\neg x \lor \neg x \lor y)$. Via this auxilliary relation we can easily define equality as $(x = y) \equiv (x \rightarrow y) \land (y \rightarrow x)$. This implies that pp-definitions and equality-free pp-definitions over $\Gamma$ coincide and that $\closneq{\Gamma} = \clos{\Gamma}$. Furthermore, it is easy to see that we can define the 4-ary Horn-clause $(\neg x \lor \neg y \lor \neg z \lor v)$ as \[(\neg x \lor \neg y \lor \neg z \lor v) \equiv \exists w \,.\, (\neg x \lor \neg y \lor w) \lor (\neg w \lor \neg z \lor v).\] This definition can be repeated for higher arities and we conclude that $\cclone{\Gamma}$ is precisely the set of all relations definable as conjunctions of Horn-clauses. In co-clone terminology this set of relations is denoted by $\IE_2$ where the naming convention carries the following meaning: 
\begin{enumerate}
    \item 
    the subscript $2$ means that $\IE_2$ contains both $\{(0)\}$ and $\{(1)\}$,
    \item 
    $E_2$ is the corresponding algebraic object (a {\em clone}) generated by the Boolean function $\land$, and $\IE_2$ is then supposed can then be interpreted as the set of all Boolean relations {\em invariant} under $\land$. 
\end{enumerate}

One natural interpretation of the latter condition is that we for any propositional, conjunctive Horn-formula $\psi$, are guaranteed that the (component-wise) application $f \land g$ is a model of $\psi$ whenever $f$ and $g$ are both models of $\psi$. Due to Post's lattice (see Figure~\ref{fig:post-latticeAppendix}) similar descriptions are known for {\em all} possible sets of Boolean relations.
\end{example}

\section{Omitted Proofs}
\unitclauses*
\begin{proof}
Let $(\KB, H, M, x)$ be an instance of $\isfacet(\text{dualHorn})$.
We exhaustively apply unit propagation to $\KB$ and obtain a representation as a unit clause-free part $\KB'$ and a set of unit clauses $U$ such that $\KB \equiv \KB' \land U$ and $\var(\KB') \cap \var(U) = \emptyset$.
We output an arbitrary no-instance if one of the following cases applies.

\begin{enumerate}
    \item $\KB$ is unsatisfiable (no explanations can exist).
    \item An $m \in M$ occurs as $\neg m$ in $U$ (no explanations can exist).
    \item $x$ or $\neg x$ occurs in $U$ (then $x$ can not be a facet).
\end{enumerate}
 Otherwise, we map to the instance $(\KB', H\setminus U, M \setminus U, x)$ of $\isfacet(\text{dualHorn}^{\text{-}})$. It is easy to verify that $x$ is a facet in $(\KB', H\setminus U, M \setminus U)$ if and only if $x$ is a facet in $(\KB, H, M)$, provided none of the three cases above applies.
\end{proof}

\ubdualHorn*
\begin{proof}
We first observe that in any $\isfacet(\text{dualHorn}^{\text{-}})$ instance $(\KB, H, M, x)$ , we deal with clauses of type (1) $(x \rightarrow y)$, (2) $(x_1 \lor \dots \lor x_k)$ for $k \geq 2$, and (3) $(\neg x_0 \lor x_1 \lor \dots \lor x_k)$ for $k \geq 2$.
Note that $\KB$ is 1-valid. Consequently, there is an explanation if and only if $H$ is an explanation.
We observe further that to explain a single $m\in M$, a single $h\in H$ is always sufficient. Therefore, the 
algorithm in  Lemma~\ref{lem:facetImpP} is applicable, and the running time is still polynomial, since $h(m)$ and $M_x$ can still be computed in polynomial time.
\end{proof}

\NPverify*
\begin{proof}
Let ${\cal A}$ be a polynomial time algorithm for $\SAT(\Gamma^+)$. Let $(\KB, H, M)$ be an instance of $\PMABD(\Gamma)$ and $E \subseteq H$ a given explanation candidate. The following algorithm determines in polynomial time whether $E$  is an explanation or not.
\begin{algorithmic}[1]
    \STATE {\# \emph{determine whether $\KB \land E$ is consistent:}}
    \IF {not ${\cal A}(\KB \land E$)}
    \RETURN False
    \ENDIF
    
    \STATE {\# \emph{determine whether $\KB \land E \models M$}}
    \FOR {all $m \in M$}
      \IF {${\cal A}(\KB \land E \land \neg m$)}
      \RETURN False
      \ENDIF
    \ENDFOR
    \RETURN {True}
\end{algorithmic}

 Note that both $\KB \land E$ and $\KB \land E \land \neg m$ can be expressed as $\Gamma^+$-formulas. We conclude by observing that the algorithm calls ${\cal A}$ a linear number of times and thus runs in polynomial time.
\end{proof}

\InNp* 
\begin{proof}
Denote by ${\cal B}$ the algorithm from Lemma~\ref{lem:NPverify} to verify an explanation candidate in polynomial time. Let $(\KB, H, M, x)$ be an instance of $\isfacet(\Gamma)$. The algorithm for $\NP$-membership is as follows.

\begin{enumerate}
\item guess $E_1 \subseteq H$ such that $x \in E_1$,
\item guess $E_2 \subseteq H$ such that $x \notin E_2$,
\item use ${\cal B}$ to verify in polynomial time that $E_1$ is an explanation,
\item use ${\cal B}$ to verify in polynomial time that $E_1\setminus \{x\}$ is \emph{not} an explanation,
\item use ${\cal B}$ to verify in polynomial time that $E_2$ is an explanation.
\end{enumerate}

\end{proof}

\lemFacetT*
\begin{proof}
Let $(\KB, H, M, x)$ be an instance of $\isfacet(\Gamma \cup \{(x)\})$. If $x$ occurs as a unit clause $(x) \in \KB$, then it can not be a facet, %
as it is forced to be true in any model of $\KB$ and thus can always be removed from an explanation. Hence, the output is %
a fixed negative instance, e.g., $(y, x, y, x)$.
If $x$ does not occur as a unit clause, 
we can simulate unit clauses as follows. %
Identify all variables occurring in positive unit clauses with a single fresh variable $t$, then delete all positive unit clauses, and add $t$ to $H$ and to $M$. 
\end{proof}

\FacetFext*
\begin{proof}
By Lemma~\ref{lem:FacetF} we can reduce $\PMABD(\Gamma \cup \{(\neg x)\})$ to $\isfacet(\Gamma \cup \{x \rightarrow y\})$.
Note that $(x\rightarrow y)$ is 1-valid and Horn, and thus $(x\rightarrow y) \in \IE_1 \subseteq \II_1 \subseteq \clos{\Gamma}$. Since by Lemma~\ref{lem:express_equality}
$\clos{\Gamma} = \closneq{\Gamma}$, we obtain that 
$(x\rightarrow y) \in \closneq{\Gamma}$. Thus, $\Gamma \cup \{x \rightarrow y\} \subseteq \closneq{\Gamma}$, and we can apply Lemma~\ref{lem:baseIndneq} to obtain the desired reduction to $\isfacet(\Gamma)$.    
\end{proof}

\FacetFextB*
\begin{proof}
    We first consider the case where $\II_1 = \clos{\Gamma}$. Since $(\neg x) \notin \II_1$, we obtain that $\clos{\Gamma \cup \{(\neg x)\}} = \II_2 = \BR$. Thus, $\PMABD(\Gamma\cup \{(\neg x)\})$ is $\SigmaP$-complete \cite{NordhZ08}. 
    By Lemma~\ref{lem:FacetFext} we can reduce this problem to $\isfacet(\Gamma)$, providing the $\SigmaP$-hardness.

    Now consider the case $\IN \subseteq \clos{\Gamma}$. Since $(x) \notin \IN$, $(x) \notin \IN_2$, $(x)\notin \II_0$, $(x) \notin \II$, $(x) \in \II_1$, we obtain that $\clos{\Gamma \cup \{(x)\}} = \II_1$. Thus,  by the first case $\isfacet(\Gamma\cup \{(x)\})$ is $\SigmaP$-hard.
    By Lemma~\ref{lem:FacetT} we can reduce this problem to $\isfacet(\Gamma)$, providing the $\SigmaP$-hardness.
\end{proof}

\iehard*
\begin{proof}
    We first consider the case where $\IE_1 = \clos{\Gamma}$. Since $(\neg x) \notin \IE_1$, we obtain that $\clos{\Gamma \cup \{(\neg x)\}} = \IE_2$. Thus, $\PMABD(\Gamma\cup \{(\neg x)\})$ is $\NP$-complete \cite{NordhZ08}. 
    By Lemma~\ref{lem:FacetFext} we can reduce this problem to $\isfacet(\Gamma)$, providing the $\NP$-hardness.

    Now consider the case $\IE \subseteq \clos{\Gamma}$.
    Since $(x) \notin \IE$, $(x) \notin \IE_0$, $(x) \in \IE_1$, we obtain that $\clos{\Gamma \cup \{(x)\}} = \IE_1$. Thus, by the first case $\isfacet(\Gamma\cup \{(x)\})$ is $\NP$-hard.
    By Lemma~\ref{lem:FacetT} we can reduce this problem to $\isfacet(\Gamma)$, providing the $\NP$-hardness.
\end{proof}

\LemDivEq*
\begin{proof}
  The same reduction as in Lemma~\ref{lem:abdIsfacet} works, replacing $x$ by $k = 2$ in the target instance. That is, $(\KB, H, M) \mapsto (\KB', H', M', 2)$.
\end{proof}

\LemDivIN*
\begin{proof}
  The reduction of Lemma~\ref{lem:FacetF} works %
  for $\divABD$ by replacing $x$ with $k = 2$ in the target instance. Then, we proceed analogously to Lemma~\ref{lem:FacetFext} and Lemma~\ref{lem:FacetFextB}. %
\end{proof}

\LemDivImp*

\begin{proof}
Let $(\varphi, k)$ be an instance of $\divPosTwoSAT$, where $\varphi = \bigwedge_{i=1}^m C_i$ and each clause $C_i$ is positive and of size 2. Denote $\varphi$'s variables by $x_1, \dots, x_r$. Introduce fresh variables $c_1, \dots, c_m$ to represent the clauses. We construct an abduction instance $(\KB, H, M)$ as follows.
\begin{align*}
    \KB & = \bigwedge_{i=1}^m \bigwedge_{x \in C_i} (x \rightarrow c_i) \\
    H  &= \{x_1, \dots, x_r\} \\
    M  &= \{c_1, \dots, c_m\} \\
\end{align*}
It is easily seen that there is a one-to-one correspondence between models of $\varphi$ and explanations for $M$: if $\sigma$ is a model, then the corresponding explanation is given by $E_\sigma = \{x \in H \mid \sigma(x) = 1\}$. One further observes that the Hamming distance between two models $\sigma, \theta$ is exactly $|E_\sigma \Delta E_\theta|$.
\end{proof}

\begin{figure*}[htp]
    \centering
       \includegraphics[width=\textwidth]{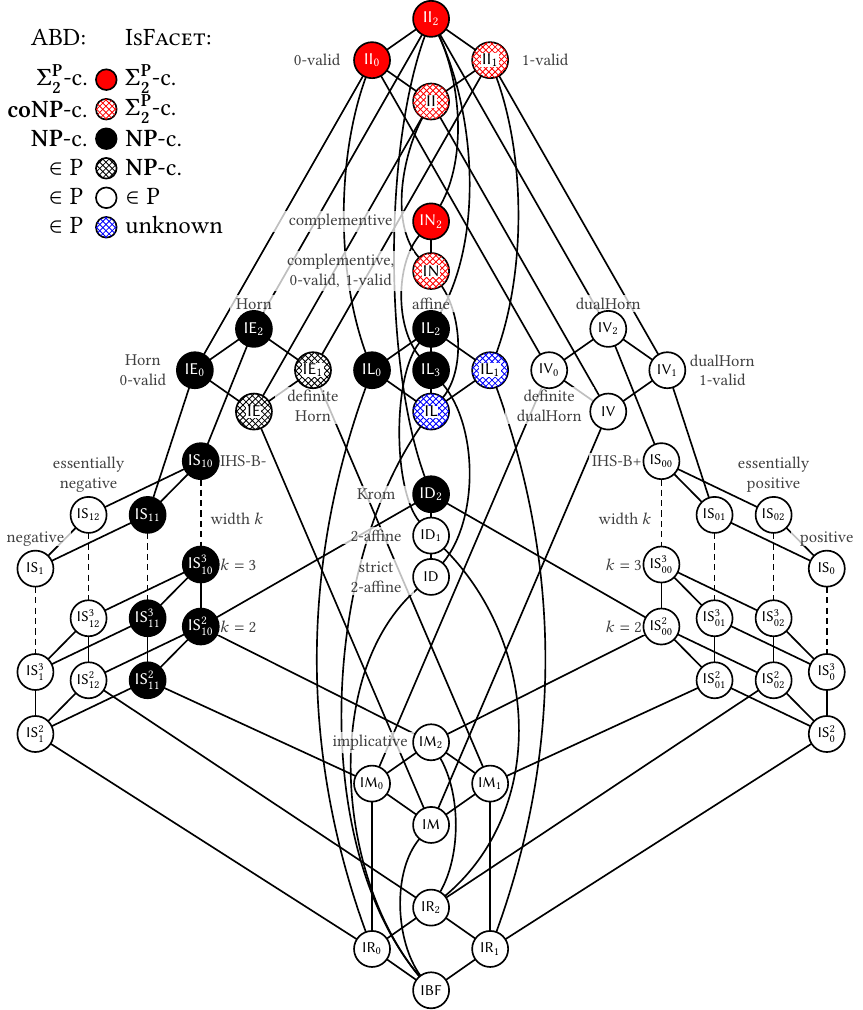}
    \caption{Illustration of complexity via Post's lattice.}
    \label{fig:post-latticeAppendix}
\end{figure*}

\lbAffine*
\begin{proof}
For the first statement, let $(\KB,H,M,k)$ be an instance of $\divABD(\text{2-affine})$. We reuse the cluster representation from Lemma~\ref{lem:width2affineP} and again
propagate all unit clauses as in Lemma~\ref{lem:unitclauses}.
Provided explanations exist, we can now construct two explanations of maximal distance from ``local'' to ``global''. For each $m_i \in M$ be $X_i$ the corresponding equivalence class, that is, $m_i \in X_i$. We 
construct two ``local'' explanations for an $m_i$ of maximal distance by picking for $E^{m_i}_1$ one representative from $X_i \cap H$, and for 
$E^{m_i}_2$ the remaining representatives (if any, otherwise $E^{m_i}_2 = E^{m_i}_1$). This maximizes $|E^{m_i}_1 \Delta E^{m_i}_2$| (within 
$X_i \cap H$). Two ``global'' explanations (of maximal distance within $\bigcup_{i = 1}^p X_i \cap H$) are obtained by the corresponding 
unions, that is, $E_1 = \bigcup_{i = 1}^p E^{m_i}_1$ and $E_2 = \bigcup_{i = 1}^p E^{m_i}_2$. Additional variables in $ H' := H 
\setminus \bigcup_{i = 1}^p X_i \cup Y_i$ may exist. Those are used to 
maximize the distance further. For every cluster $(X, Y)$ containing variables from $H'$: add to $E_1$ either all variables in $X\cap H$ or 
all in $Y\cap H$, depending on which set is larger.

For the second statement, 
    let $(\KB,H,M,k)$ be an instance of $\divABD(\text{EP})$. We first perform unit propagation exactly as in Lemma~\ref{lem:essentially_negative}. Now we have an instance that contains positive clauses of $size \geq 2$ and equality clauses.
    Again, exactly as in Lemma~\ref{lem:essentially_negative} we organize the variables into equivalence classes.
    Now we can construct two explanations of maximal distance (if explanations exist) by the same procedure as above. It is only simpler since all equivalence classes are independent this time. In other words, each cluster is of the form $(X, \emptyset)$.
\end{proof}

\end{document}